\theoremstyle{plain}
\newtheorem{theorem}{Theorem}[section]
\newtheorem{lemma}[theorem]{Lemma}
\theoremstyle{definition}
\newtheorem{definition}[theorem]{Definition}
\newtheorem{assumption}[theorem]{Assumption}
\theoremstyle{remark}
\crefname{property}{property}{Property}
\crefname{equation}{eq}{Eq}
\def\Sreset{\mathcal{S}_{reset}}
\newcommand{\Dset}[1]{\Delta_{#1}(K)}
\def\Dfree{\Dset{0}}
\newcommand{\Lagr}{\mathcal{L}}
\icmltitlerunning{Provable Reset-free Reinforcement Learning by No-Regret Reduction}
\begin{document}

\twocolumn[
\icmltitle{Provable Reset-free Reinforcement Learning by No-Regret Reduction}



\icmlsetsymbol{equal}{*}

\begin{icmlauthorlist}
\icmlauthor{Hoai-An Nguyen}{xxx,yyy}
\icmlauthor{Ching-An Cheng}{xxx}
\end{icmlauthorlist}

\icmlaffiliation{yyy}{Rutgers University}
\icmlaffiliation{xxx}{Microsoft Research}

\icmlcorrespondingauthor{Hoai-An Nguyen}{hoaian.nguyen@rutgers.edu}
\icmlcorrespondingauthor{Ching-An Cheng}{chinganc@microsoft.com}

\icmlkeywords{Machine Learning, ICML}

\vskip 0.3in
]



\printAffiliationsAndNotice{}

\begin{abstract}
Reinforcement learning (RL) so far has limited real-world applications. One key challenge is that typical RL algorithms heavily rely on a reset mechanism to sample proper initial states; these reset mechanisms, in practice, are expensive to implement due to the need for human intervention or heavily engineered environments. To make learning more practical, we propose a generic no-regret reduction to systematically design reset-free RL algorithms. Our reduction  turns the reset-free RL problem  into a two-player game. We show that achieving sublinear regret in this two-player game would imply learning a policy that has both sublinear performance regret and sublinear total number of resets in the original RL problem. This means that the agent eventually learns to perform optimally and avoid resets. To demonstrate the effectiveness of this reduction, we design an instantiation for linear Markov decision processes, which is the first provably correct reset-free RL algorithm. 
\end{abstract}

\section{Introduction} \label{sec:intro}
Reinforcement learning (RL) enables an artificial agent to learn problem-solving skills directly through interactions. However, RL is notorious for its sample inefficiency. Successful stories of RL so far are mostly limited to applications where a fast and accurate simulator of the world is available for collecting large amounts of samples (like in games). Real-world RL, such as robot learning, remains a challenging open question. 

One key obstacle to scaling up data collection in real-world RL problems is the need for resetting the agent. The ability to reset the agent to proper initial states plays an important role in typical RL algorithms, as it affects which region the agent can explore and whether the agent can recover from its past mistakes~\citep{kakade2002approximately}. 
In most settings, completely avoiding resets without prior knowledge of the reset states or environment is impossible. 
In the absence of a reset mechanism, agents may get stuck in absorbing states, such as those where it has damaged itself or irreparably altered the learning environment.
For instance, a robot learning to walk would inevitably fall before perfecting the skill, and timely intervention is needed to prevent damaging the hardware and to return the robot to a walkable configuration. Another example is a robot manipulator learning to stack three blocks on top of each other. Unrecoverable states 
would include the robot knocking a block off the table, or the robot smashing itself forcefully into the table. Reset would then reconfigure the scene to a meaningful initial state that is good for the robot to learn from. 

Although resetting is necessary to real-world RL,
it is non-trivial to implement. Unlike in simulation, a real-world agent (e.g., a robot) cannot be reset to an arbitrary initial state with a click of a button. Resetting in the real world is expensive as it usually requires constant human monitoring and intervention. Normally, a person would need to oversee the entire learning process and manually reset the agent (e.g., a robot) to a meaningful starting state before it enters an unrecoverable state where the problem can no longer be solved. Sometimes automatic resetting can be implemented by cleverly engineering the physical learning environment~\citep{gupta21}, but it is not always feasible.

An approach we can take to make real-world RL more cost-efficient is through \emph{reset-free} RL. 
The goal of reset-free RL is not to completely remove resets, but to have an agent learn how to perform well while minimizing the amount of external resets required. Some examples of problems that have been approached in reset-free RL include agents learning dexterity skills, such as picking up an item or inserting a pipe, and learning how to walk \citep{ha20, gupta21}. While there has been numerous works proposing reset-free RL algorithms using approaches such as multi-task learning \citep{gupta21, ha20}, learning a reset policy \citep{eysenbach18, sharma22}, and skill-space planning \citep{lu20}, to our knowledge, there has not been any work with provable guarantees. 

In this work, we take the first step by providing a provably correct framework to design reset-free RL algorithms. Our framework is based on the idea of a no-regret reduction. First, we reduce the reset-free RL problem to a sequence of constrained Markov decision processes (CMDPs) with an adaptive initial state sequence. Using the special structure of reset-free RL, we establish the existence of a common saddle-point for these CMDPs. Interestingly, we show such a saddle-point exists without using the typical Slater's condition for strong duality and despite the fact that CMDPs with different initial states in general do not share a common Markovian optimal policy. Then, we derive our main no-regret reduction, which further turns this sequence into a two-player game between a primal player (updating the Markovian policy) and a dual player (updating the Lagrange multiplier function of the CMDPs) to solve for the common saddle-point. We show that if no regret is achieved in this game, then the regret of the original RL problem and the total number of required resets are both provably sublinear. This means the agent eventually learns to perform optimally and avoid resets. Using this reduction, we design a reset-free RL algorithm instantiation under the linear MDP assumption, using learning with upper confidence bound as the baseline algorithm for the primal player and projected gradient descent for the dual player. Our algorithm achieves $\tilde{O}(\sqrt{d^3H^4K})$ regret and $\tilde{O}(\sqrt{d^3H^4K})$ resets with high probability, where $d$ is the feature dimension, $H$ is the length of an episode, and $K$ is the total number of episodes.\footnote{
In the tabular MDP setting, our bounds on the regret and total resets become $\tilde{O}(\sqrt{|\mathcal{S}|^3|\mathcal{A}|^3H^4K})$, where $|\mathcal{S}|$,$|\mathcal{A}|$ denote the size of the state and action spaces, respectively.} 
\vspace{-5mm}
\section{Related Work} \label{sec:related}

Reset-free RL, despite being a promising avenue to tackle real-world RL, is a relatively new concept in the literature. The work thus far, to our knowledge, has been limited to approaches without theoretical guarantees but with only empirical verification. One such work takes the approach of learning a reset policy 
\citep{eysenbach18, sharma22}. The idea is to learn two policies concurrently: one to maximize reward, and one to bring the agent back to a reset-free initial state if they encounter a \emph{reset state} (a state which normally requires human intervention). This approach prevents the need for manual resets; however, it requires the knowledge of the reset states \citep{eysenbach18}. \citet{sharma22} avoid this assumption but assume given demonstrations on how to accomplish the goal and a fixed initial state distribution. 

Using multi-task learning is another way to perform resets without human intervention. Here, the agent learns to solve multiple tasks instead of just maximizing the reward of one task.
The hope is that a combination of the learned tasks can achieve the main goal, 
and some tasks can perform natural resets for others. 
This approach breaks down the reset process and (possibly) makes it easier to learn. However, the order in which tasks should be learned needs to be provided manually~\citep{gupta21, ha20}. 

A related problem is infinite-horizon non-episodic RL with provable guarantees (see \citet{wei20, wei19, dong19} and the references within) as this problem is also motivated by not using resets. In this setting, there is only one episode that goes on indefinitely. The objective is to maximize cumulative reward, and progress is usually measured in terms of regret with the comparator being an optimal policy. However, compared with the reset-free RL setting we study here, extra assumptions, such as the absence or knowledge of absorbing states, are usually required to achieve sublinear regret. In addition, the objective does not necessarily lead to a minimization of resets as the agent can leverage reset transitions to maximize reward. In reset-free RL, minimizing/avoiding resets is a priority. Learning in infinite-horizon CMDPs (where one possible constraint could be minimizing resets) has been studied \citep{zheng20, jain22}, but to our knowledge, all such works make strong assumptions such as a fixed initial state distribution or known dynamics. None of these assumptions are feasible for most real-world RL settings. Enforcing a fixed initial state distribution or removing absorbing states in theory oftentimes requires physically resetting a real-world agent to satisfy those desired mathematical conditions. In this paper, we focus on an episodic setting of reset-free RL (see \cref{sec:prelim}); a non-episodic formulation of reset-free RL could be an interesting one for further research.

Another related problem is safe RL, which involves solving the standard RL problem while adhering to some safety constraints. We can think of reset states in reset-free RL as unsafe states in safe RL.
There has been a lot of work in safe RL, with approaches such as utilizing a baseline safe (but not optimal) policy \citep{huang22, polo11}, pessimism \citep{amani22}, and shielding \citep{alshiekh17,wagener2021safe}. These works have promising empirical results but usually require extra assumptions such as a given baseline policy or knowledge of unsafe states. There are also provable safe RL algorithms. To our knowledge, all involve framing safe RL as a CMDP. Here, the safety constraints are modeled as a cost, and the overall goal is to maximize performance while keeping the cost below a threshold. Some of the works explicitly study safe RL while others study learning in CMDPs more generally. They commonly have provable guarantees of either sublinear regret and constraint violations, or sublinear regret with zero constraint violation \citep{wei21, hasanzadeZonuzy20, qiu20, wachi20, efroni20, ghosh22, ding20}. However, most works (including all the aforementioned ones), consider the episodic case where the initial state distribution of each episode is \emph{fixed}. This prevents a natural extension to reset-free learning as human intervention would be required to reset the environment at the end of each episode. In technical terms, this is the difference between solving a sequence of the \emph{same} CMDP versus solving a sequence of \emph{different} CMDPs. Works that allow for arbitrary initial states require fairly strong assumptions, such as knowledge (and the existence) of safe actions from each state \citep{amani21}. In our work, we utilize some techniques from provable safe RL for reset-free RL. However, it is important to note that safe RL and reset-free RL are fundamentally different, albeit related, problems. Safe RL aims to ensure the safety of an agent. On the other hand, reset-free RL aims to avoid reset states, which encompass not only unsafe states but all undesirable ones, for a varying sequence of initial states as the agent cannot be reset freely.

We weaken typical assumptions of current approaches in empirical reset-free RL, infinite-horizon RL, and safe RL by dropping any requirements on knowledge of undesirable states or for demonstrations, and by allowing \emph{arbitrary} initial state sequences that admit reset-free policies. 
Considering arbitrary initial state sequences where initial states potentially are correlated with past behaviors is not only necessary to the reset-free RL setting, but also allows for extensions to both lifelong and multi-task learning.
We achieve this important relaxation on the initial state sequence with a key observation that identifies a shared Markovian policy saddle-point across CMDPs where the constraint imposes \emph{zero} resets. This observation is new to our knowledge, and it is derived from the particular structure of  reset-free RL; we note that generally, CMDPs with different initial states do not admit shared Markovian policy saddle-points.
Finally, by the analogy between safe states and reset states, on the technical side, our framework and algorithm can also be viewed as the first provable safe RL algorithm that allows for arbitrary initial state sequences without strong assumptions.

While our main contribution is a generic reduction technique to design reset-free RL algorithms, we also instantiate the framework and achieve regret and constraint violation bounds that are still comparable to the above works when specialized to their setting. Under the linear MDP assumption, our algorithm achieves $\tilde{O}(\sqrt{d^3H^4K})$ regret and violation (equivalently, the number of resets in reset-free RL), which is asymptotically equivalent to \citet{ghosh22} and comparable to the bounds of $\tilde{O}(\sqrt{d^2H^6K})$ from \citet{ding20} for a fixed initial state.

In summary, our contributions are as follows. 
\begin{enumerate}
    \item We create a framework to design provably correct reset-free RL algorithms via a reduction first to a sequence of CMDPs with an adaptive initial state sequence, and then to a two-player game. We prove that achieving sublinear regret in this two-player game implies learning a policy that achieves sublinear performance regret and sublinear number of resets in the original problem.
    \item On the technical side, we show that such a reduction can be constructed without using the typical Slater's condition for strong duality and despite the fact that CMDPs with different initial states in general do \emph{not} share a common Markovian optimal policy.
    \item We instantiate the framework under the linear MDP setting as a proof of concept, creating the first provably correct reset-free RL algorithm to our knowledge that achieves sublinear regret and resets.
\end{enumerate}
\vspace{-5mm}
\section{Preliminary} \label{sec:prelim}
We consider episodic reset-free RL: in each episode, the agent aims to optimize for a fixed-horizon return starting from the last state of the previous episode or some state that the agent was reset to if reset occurred. 
\vspace{-2mm}
\paragraph{Problem Setup and Notation}
Formally, we can define episodic reset-free RL as a Markov decision process (MDP), $(\mathcal{S}, \mathcal{A}, P, r, H)$, where $\mathcal{S}$ is the state space, $\mathcal{A}$ is the action space, $P = \{P_h\}_{h=1}^H$ is the transition dynamics, $r= \{r_h\}_{h=1}^H$ is the reward function, and $H$ is the task horizon. We assume $P$ and $r$ are unknown. 
We allow $\mathcal{S}$ to be large or continuous but assume $A$ is relatively small so that $\max_{a\in \mathcal{A}}$ can be performed.
We designate the set of reset states, or any states that human intervention normally would have been required, as $\Sreset \subseteq \mathcal{S}$; we do not assume that the agent has knowledge of $\Sreset$. We also do not assume that there is a reset-free action at each state, as opposed to \citet{amani21}. Therefore, the agent needs to plan for the long-term to avoid resets. We assume $r_h: \mathcal{S} \times \mathcal{A} \rightarrow [0,1]$, and for simplicity, we assume $r_h$ is deterministic. However, we note that it would be easy to extend this to the setting where rewards are stochastic.

The agent interacts with the environment for $K$ total episodes.  Following the convention of episodic problems, we suppose the state space $\mathcal{S}$ is layered, and a state $s_t \in \mathcal{S}$ at time $t$ is factored into two components $s_t =(\bar{s}, t)$ where $\bar{s}$ denotes the time-invariant part.
Reset happens at some time $t$ if the time-invariant part of $s_t$, $\bar{s} \in \Sreset$. The initial state of the next episode will be $s_1 = (\bar{s}', 1)$ where $\bar{s}'$ is sampled from an unknown state distribution. In a given episode, if reset happens, we designate the time step this occurs as $t = \tau$.
If there is no reset in an episode, the initial state of the next episode is the last state of the current episode, i.e., for episode $k+1$, $s_1^{k+1} = (\bar{s}, 1)$ if $s_H^k = (\bar{s}, H)$ in episode $k$.\footnote{This setup covers reset-free multi-task or lifelong RL problems that are modeled as contextual MDPs. We can treat each state as $s_\tau = (\bar{s}, c, \tau)$, where $c$ denotes the context that stays constant within an episode. If no reset happens, the initial state of episode $k+1$ is $s_1^{k+1} = (\bar{s}, c^{k+1},  1)$ if $s_H^k = (\bar{s}, c^k, H)$ in episode $k$, where the new context $c^{k+1}$ may depend on the current context $c^k$.} Therefore, the initial state sequence is adaptive. This sequence is necessary to consider since in reset-free RL, we want to avoid resetting, including after each episode.

We denote the set of Markovian policies as $\Delta$, and a policy $\pi\in\Delta$ as $\pi = \{\pi_{h}(a_h| s_h)\}_{h=1}^{H}$.  We define the state value function and the state-action value function under $\pi$ as
\begin{align} \label{eq:value function}
    V_{r,h}^\pi(s) &:= \mathbb{E}_\pi \Big{[}\textstyle \sum_{t=h}^{\min(H, \tau)} r_t(s_t, a_t)|s_h = s\Big{]}\\
    Q_{r,h}^\pi(s,a) &:= r_h(s,a) + \mathbb{E}\Big{[}  V_{r,h+1}^\pi(s_{h+1}) | s_h = s, a_h = a\Big{]}, \nonumber
\end{align}
where  $h\leq \tau$, and we recall $\tau$ is the time step when the agent enters $\Sreset$ (if at all).

\paragraph{Objective}

The overall goal is for the agent to learn a Markovian policy to maximize its cumulative reward while avoiding resets. Therefore, our performance measures are as follows (we seek to minimize both quantities): 
\begin{align} \label{eq:regret and resets}
    &\text{Regret}(K) = \max_{\pi \in \Dfree }\textstyle \sum_{k=1}^K V^{\pi}_{r,1}(s_1^k) - V^{\pi^k}_{r,1}(s_1^k)\\ 
    &\text{Resets}(K) \hspace{-0.5mm} = \hspace{-0.5mm} \textstyle \sum_{k=1}^K \hspace{-0.5mm} \mathbb{E}_{\pi^k} \hspace{-1mm}\left[\textstyle \sum_{h=1}^{\min(H,\tau)} \mathds{1}[s_h \in \Sreset] \Big| s_1 \hspace{-1mm} = \hspace{-1mm} s_1^k\right] 
\end{align}
where $\Dfree\subseteq\Delta$ denotes the set of Markovian policies that avoid resets for all episodes, and $\pi^k$ is the policy used by the agent in episode $k$. Note that by the reset mechanism $\sum_{h=1}^{\min(H,\tau)} \mathds{1}[s_h \in \Sreset] \in \{ 0, 1\} $. 

Notice that the initial states in our regret and reset measures are determined by the learner. 
Given the motivation behind reset-free RL (see \cref{sec:intro}), we can expect that the initial states here are meaningful for performance comparison by construction. Otherwise, a reset would have occurred to set the learner to a meaningful state. 
Note that this means by the reset mechanism, all ``bad" absorbing states are in $\Sreset$, and hence, the agent cannot hide in a ``bad" absorbing state to achieve small regret. 
In addition, since the learner does not receive any reward within an episode after being reset, the learner cannot leverage resets to gain an advantage over the optimal policy (which never resets) to minimize or even achieve negative regret.

%

To make the problem feasible, we assume achieving no resets is possible. We state this assumption formally below.
\begin{assumption} \label{as:reset-free is feasible.}
For any sequence $\{ s_1^k \}_{k=1}^K$, the set $\Dfree$ is not empty. That is, there is a Markovian policy $\pi\in\Delta$ such that 
    $ \mathbb{E}_{\pi} [\sum_{h=1}^H \mathds{1}[s_h \in \Sreset] | s_1 = s_1^k ] = 0 $.
\end{assumption}
This assumption is simply stating that every episode of learning admits a reset-free policy.
As discussed in \cref{sec:related}, this assumption  is weaker than existing assumptions in the  literature. We note that an alternate assumption that only $s_1^1$ (i.e., the initial state of the first episode) admits a reset-free policy is insufficient to make reset-free RL feasible; since the transition dynamics are unknown to the agent, under this assumption alone, for any algorithm, there is a problem\footnote{Consider a state space $\mathcal{S}$ which can be separated into a reset-free state $s^\dagger$, a reset state $\bar{s}$, and a subset $\tilde{\mathcal{S}} = \mathcal{S} \setminus s^\dagger, \bar{s}$. The subset $\tilde{\mathcal{S}}$ contains reset states and reset-free states. For all actions taken at $s^\dagger$, the agent will land at the reset state $\bar{s}$. Given a learning algorithm, let $\tilde{\mathcal{A}}$ be a subset of actions that the learning algorithm has constant probability of taking at $s_1^1$ (the very first initial state). Since the learning agent has no knowledge of the MDP, without violating the assumption that $s_1^1$ admits a reset-free policy, we can construct an MDP such that taking actions in $\tilde{\mathcal{A}}$ would lead to a reset state and a reset mechanism which resets the agent to the reset-free state $s^\dagger$ whenever the agent enters a reset state. As a result, the learning agent has a constant probability of incurring a linear number of resets over the total number of episodes.
} such that the number of resets must be linear. 
%

We highlight that \cref{as:reset-free is feasible.} is a reasonable assumption in practice. It does not require a fixed initial state. In addition,
if reset happens, in practice, the agent is usually set to a state where it can continue to operate in without reset; if the agent is at a state where no such reset-free policy exists, reset should happen. This assumption is similar to the assumption on the existence of a perfectly safe policy in safe RL literature, which is a common and relatively weak assumption~\citep{ghosh22, ding20}. If there were to be initial states that inevitably lead to a reset, the problem would be infeasible and does not follow from the motivation of reset-free RL.
\vspace{-3mm}
\section{A No-Regret Reduction for Reset-Free RL} \label{sec:reduction}

In this section, we present our main reduction of reset-free RL to regret minimization in a two-player game. In the following, we first show that reset-free RL can be framed as a sequence of CMDPs  with an adaptive initial state sequence. 
Then we design a two-player game based on a primal-dual analysis of this sequence of CMDPs. Finally, we show achieving sublinear regret in this two-player game implies sublinear regret and resets in the original reset-free RL problem in \eqref{eq:regret and resets}, and we discuss how to leverage this framework to systematically design reset-free RL algorithms.

Our reduction differs from standard reductions involving bounding the Nash gap with the sum of two players' regret in the literature of constrained optimization and online learning. 
First, we show a reduction for a \emph{sequence} of saddle-point problems instead of for a single fixed saddle-point problem. There are CMDP methods (e.g. \cite{ghosh22} that implicitly use the two players’ regret to bound the Nash gap of a CMDP in their analysis. However, those proofs are applicable to only a single CMDP with a fixed initial state distribution. And importantly, they fundamentally rely on Slater's condition assumption (i.e., requiring a strictly feasible policy), which does not hold for the CMDPs considered here. 
Additionally, unlike the typical bound for a convex-concave saddle-point problem in the optimization literature, \cite{wang20, kov22, boyd14} the saddle-point problem of a CMDP is non-concave in terms of the policy (and convex in terms of the dual function $\lambda$). In this paper, we take a different analysis to bypass the interplaying complexities due to CMDP sequences, non-concavity, and the lack of Slater’s condition. The complete proofs for this section are in \cref{sec:missing proof of reduction}.

\subsection{Reset-free RL as a Sequence of CMDPs}
The first step of our reduction is to cast the reset-free RL problem in \cref{sec:prelim} to a sequence of CMDP problems which share the same rewards, constraints, and dynamics, but have different initial states. Each problem instance in this sequence corresponds to an episode of the reset-free RL problem, and its constraint describes the probability of the agent entering a state that requires reset.  

Specifically, we denote these constrained MDPs
as $\{(\mathcal{S}, \mathcal{A}, P,  r, H, c, s_1^k)\}_{k=1}^K$: in episode $k$, the CMDP problem is defined as 
\begin{equation} \label{eq:mainO}
    \max_{\pi \in \Delta}  V^\pi_{r,1}(s_1^k), \text{ s.t. } V^\pi_{c,1}(s_1^k) \leq 0
\end{equation}
where we define the cost as 
\begin{align*}
    c_h(s, a) \coloneqq \mathds{1}[s \in \Sreset]
\end{align*}
and $ V^\pi_{c,1}$, defined similarly to \eqref{eq:value function}, is the state value function with respect to the cost $c$ . We note that the initial state, $s_1^k$,  depends on the past behaviors of the agent, and that \cref{as:reset-free is feasible.} ensures each CMDP in \eqref{eq:mainO} is a feasible problem (i.e., there is a Markovian policy satisfying the constraint). The objective of the agent in each CMDP is to be reward-maximizing while adhering to the constraint, namely that the number of resets should be $0$. Requiring zero constraint violation instead of a small constraint violation will be crucial for our reduction. 

Since CMDPs are typically defined without early episode termination unlike \eqref{eq:value function}, with abuse of notation, we extend the definitions of $P$, $\mathcal{S}$, $r$, $c$ as follows so that the CMDP definition above is consistent with the common literature. We introduce a fictitious absorbing state denoted as $s^\dagger$ in $\mathcal{S}$, where $r_h(s^\dagger,a)=0$ and $c_h(s^\dagger,a)=0$; once the agent enters $s^\dagger$, it stays there until the end of the episode. We extend the definition $P$ such that, after the agent is in a state $s\in \Sreset$, any action it takes brings it to $s^\dagger$ in the next time step. In this way, we can write the value function, e.g. for reward, as $V_{r,h}^\pi(s) = \mathbb{E}_\pi \Big{[}\sum_{t=h}^H r_t(s_t, a_t)|s_h = s\Big{]}$ in terms of this extended dynamics. We note that these two formulations are mathematically the same for the purpose of learning; when the agent enters $s^\dagger$, it means that the agent is reset in the episode.

By the construction  above, we can write 
\begin{align*}
    \text{Resets}(K) = \textstyle \sum_{k=1}^K V_{c,1}^{\pi^k}(s_1^k)
\end{align*}
which is the same as the number of total constraint violations across the CMDPs. 
Because we do not make any assumptions about the agent's knowledge of the constraint function (e.g., the agent does not know states $\in \Sreset$), we allow the agent to reset during learning while minimizing the total number of resets over all $K$ episodes.

\subsection{Reduction to Two-Player Game}
From the problem formulation above, we see that the major difficulty of reset-free RL is the coupling between an adaptive initial state sequence and the constraint on reset probability. If we were to remove either of them, we could use standard algorithms, since the problem would become a single CMDP problem~\citep{altman1999constrained} or an episodic RL problem with varying initial states \citep{jin19}. 

We propose a reduction to systematically design algorithms for this sequence of CMDPs and therefore for reset-free RL.
The main idea is to approximately solve the saddle-point problem of each CMDP in \eqref{eq:mainO}, i.e.,
\begin{equation}\label{eq:saddle point problem}
    \max_{\pi\in\Delta} \min_{\lambda \geq 0 }  V^\pi_{r,1}(s_1^k) - \lambda V^\pi_{c,1}(s^k_1)
\end{equation}
where $\lambda$ denotes the dual variable (i.e., the Lagrange multiplier). Each CMDP can be framed as a linear program \citep{hern02} whose primal and dual optimal values match (see section $8.1$ in \citet{hazan2016introduction}). Therefore, for each CMDP, $ \max_{\pi\in\Delta} \min_{\lambda \geq 0 }  V^\pi_{r,1}(s_1^k) -\lambda V^\pi_{c,1}(s^k_1) =  \min_{\lambda \geq 0 }  \max_{\pi\in\Delta} V^\pi_{r,1}(s_1^k) - \lambda V^\pi_{c,1}(s^k_1)$. 

While using a primal-dual algorithm to solve for the saddle-point of a \emph{single} CMDP is known, using this approach for a sequence of CMDPs is not obvious. 
This is because in general, each CMDP's optimal policy (and Lagrange multiplier) can be a function of its initial state distribution~\citep{altman1999constrained}.
An easy way to see this is that an optimal policy satisfying a constraint on an initial state $s_1$ may be infeasible for a constraint on another initial state $s_1'$, if, e.g, running this policy starting from $s_1$ does not reach $s_1'$ and therefore can have arbitrary behavior at $s_1'$. This behavior is different from that of unconstrained MDPs, where there is an optimal Markovian policy across all the initial states.


Therefore, in general, a common saddle-point of \emph{Markovian} polices and Lagrange multipliers does not necessarily exist for a sequence of CMDPs with varying initial states.\footnote{A shared saddle-point with a \emph{non-Markovian} policy always exists on the other hand.} As a result, it is unclear if there exists a primal-dual algorithm to solve this sequence, especially given that the initial states here are adaptively chosen.

\paragraph{Existence of a Shared Saddle-Point}
Fortunately, we show that there is a shared saddle-point with a Markovian policy across all the CMDPs considered in \eqref{eq:mainO} due to the special structure of reset-free RL. It is a proof that does not use Slater’s condition for strong duality, unlike similar literature, but attains the desired property. Instead, it follows from the fact that we impose a constraint that requires zero constraint violations. In addition, we also use \cref{as:reset-free is feasible.} and the fact that the cost c is non-negative. We formalize this below.

\begin{restatable}{theorem}{saddlePt}
\label{th:existence of shared saddle-point}
There\hspace{-0.1mm} exists\hspace{-0.1mm} a \hspace{-0.1mm}function\hspace{-0.1mm} $\hat{\lambda}(\cdot)$\hspace{-0.1mm} where \hspace{-0.1mm}for\hspace{-0.1mm} each \hspace{-0.2mm} $s$\hspace{-0.2mm},
    \begin{align*}
        \hat{\lambda}(s) \in \arg\min_{y\geq 0} \left( \max_{\pi \in \Delta}  V^\pi_{r,1}(s) - y V^\pi_{c,1}(s) \right),
    \end{align*}
    and a Markovian policy $\pi^*\in\Delta$, such that 
    $(\pi^*, \hat{\lambda})$ is a saddle-point to the CMDPs
    \[
    \max_{\pi \in \Delta}  V^\pi_{r,1}(s_1), \text{ s.t. } V^\pi_{c,1}(s_1) \leq 0
    \]
    for \emph{all} initial states $s_1\in\mathcal{S}$ such that the CMDP is feasible. That is, for all $\pi\in\Delta$, $\lambda:\mathcal{S}\to\mathbb{R}$, and $s_1\in\mathcal{S}$,
    \begin{align} \label{eq:saddle-point}
    \hspace{-3mm}
          V^{\pi^*}_{r,1}(s_1) - \lambda(s_1) V^{\pi^*}_{c,1}(s_1) 
         &\geq  V^{\pi^*}_{r,1}(s_1) - \hat{\lambda}(s_1) V^{\pi^*}_{c,1}(s_1) \nonumber \\
          &\geq  V^{\pi}_{r,1}(s_1) - \hat{\lambda}(s_1) V^{\pi}_{c,1}(s_1).
    \end{align}
\end{restatable}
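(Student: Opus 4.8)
The plan is to reduce the saddle-point claim to two independent ingredients: the existence of a \emph{single} Markovian policy $\pi^*$ that is simultaneously reset-free and reward-optimal from every feasible initial state, and the attainment of the dual optimum, which supplies the pointwise-defined multiplier $\hat\lambda(\cdot)$. The key simplification is that once $\pi^*$ satisfies $V^{\pi^*}_{c,1}(s_1)=0$ for every feasible $s_1$, the left inequality in \eqref{eq:saddle-point} holds with equality for \emph{every} $\lambda$ (both sides collapse to $V^{\pi^*}_{r,1}(s_1)$, regardless of the sign of $\lambda(s_1)$), so the entire burden falls on the right inequality, i.e., showing $V^{\pi^*}_{r,1}(s_1)\ge V^{\pi}_{r,1}(s_1)-\hat\lambda(s_1)V^{\pi}_{c,1}(s_1)$ for all $\pi\in\Delta$.

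First I would construct $\pi^*$. Since $c_h(s,a)=\mathds{1}[s\in\Sreset]\ge 0$, any feasible policy must have $V^\pi_{c,1}(s_1)=0$ exactly, i.e., it never visits $\Sreset$. I would define, by backward induction over layers, the set of safe states $\mathcal{S}_{\mathrm{safe}}$: a layer-$H$ state $(\bar s,H)$ is safe iff $\bar s\notin\Sreset$, and a layer-$h$ state $(\bar s,h)$ is safe iff $\bar s\notin\Sreset$ and some action $a$ has $P_h(\cdot\mid s,a)$ supported entirely on safe states of layer $h+1$. One checks that $s_1$ admits a reset-free policy iff it is safe, and by \cref{as:reset-free is feasible.} every initial state the learner faces is safe. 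Restricting the action set at each safe state to these ``safe'' actions $\mathcal{A}_{\mathrm{safe}}(s)$ and running standard finite-horizon value iteration for the reward on this safe-restricted MDP yields a greedy Markovian policy $\pi^*$ (defined arbitrarily off $\mathcal{S}_{\mathrm{safe}}$). A short induction gives $V^{\pi^*}_{c,1}(s_1)=0$ for every safe $s_1$. Crucially, the \emph{unconstrained} safe-restricted MDP admits a policy optimal from all starting states at once; since any reset-free $\pi$ reaches only safe states and uses only safe actions along its reachable support, its value coincides with its value in the safe-restricted MDP, so Bellman optimality gives $V^{\pi^*}_{r,1}(s_1)\ge V^{\pi}_{r,1}(s_1)$ for every reset-free $\pi$. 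Hence $V^{\pi^*}_{r,1}(s_1)=\max_{\pi:\,V^\pi_{c,1}(s_1)\le 0}V^\pi_{r,1}(s_1)=:\mathrm{OPT}(s_1)$.

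Second I would define $\hat\lambda$. By the LP formulation of each feasible CMDP noted above, strong duality holds and the outer minimization $\min_{y\ge0}\max_{\pi\in\Delta}V^\pi_{r,1}(s)-yV^\pi_{c,1}(s)$ attains its optimum; I set $\hat\lambda(s)$ to be any such minimizer, so that this value equals $\mathrm{OPT}(s)$. By the definition of $\hat\lambda(s_1)$ as a minimizer, $V^{\pi}_{r,1}(s_1)-\hat\lambda(s_1)V^{\pi}_{c,1}(s_1)\le \mathrm{OPT}(s_1)$ for all $\pi$. Combining with $V^{\pi^*}_{r,1}(s_1)=\mathrm{OPT}(s_1)$ from the first step yields the right inequality, and together with the trivial left inequality this completes the saddle-point claim for every feasible $s_1$. (On infeasible states the argmin is empty; since the learner only ever encounters feasible states under \cref{as:reset-free is feasible.}, $\hat\lambda$ may be set arbitrarily there.)

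The main obstacle is the middle step of the construction: in general there is no single Markovian policy simultaneously optimal for CMDPs with different initial states, which is precisely why the standard primal-dual machinery fails for an adaptive initial-state sequence. I expect the resolution to hinge entirely on the zero-cost, nonnegative-cost structure, which forces feasibility to mean ``stay in the safe set'' and thereby converts the initial-state-dependent constraint into a per-state action restriction; this turns the constrained family into one unconstrained MDP on a restricted action space, for which dynamic programming does deliver an all-initial-states-optimal Markovian policy. The remaining care is to verify rigorously that a feasible policy's reachable support is confined to safe states and safe actions (so its value is unchanged in the restricted MDP), and that the dual optimum is genuinely attained so that $\hat\lambda(s_1)$ is a finite number rather than a limit.
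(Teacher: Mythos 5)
Your proposal is correct and follows essentially the same route as the paper: restrict the action set to cost-preserving (``safe'') actions, take the all-initial-states-optimal Markovian policy of that unconstrained restricted MDP as $\pi^*$, note the left inequality is trivial because $V^{\pi^*}_{c,1}(s_1)=0$, and obtain the right inequality from the argmin definition of $\hat\lambda$ together with LP strong duality and the equivalence between zero-cost policies and policies supported on the restricted action sets. The only cosmetic difference is that you build the safe set by explicit backward induction over layers, whereas the paper defines $\overline{\mathcal{A}}_s=\{a: Q_c^*(s,a)\le V_c^*(s)\}$ and proves the equivalence via the performance difference lemma; these yield the same construction.
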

\begin{restatable}{corollary}{saddlePtCor}
\label{lm:regret equivalence}
For $\pi^*$ in \cref{th:existence of shared saddle-point}, it holds that $\text{Regret}(K) = \sum_{k=1}^K V_{r,1}^{\pi^*}(s_1^k) - V_{r,1}^{\pi^k}(s_1^k)$. 
\end{restatable}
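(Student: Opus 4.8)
The plan is to reduce the regret definition to a pure optimization over reset-free policies and then identify $\pi^*$ as its maximizer using the two saddle-point inequalities from \cref{th:existence of shared saddle-point}. First I would observe that the subtracted term $\sum_{k=1}^K V^{\pi^k}_{r,1}(s_1^k)$ in the definition of $\text{Regret}(K)$ does not depend on the comparator policy, so the claim is equivalent to showing that $\pi^*$ attains $\max_{\pi\in\Dfree}\sum_{k=1}^K V^{\pi}_{r,1}(s_1^k)$. Note that each $s_1^k$ is a feasible initial state for the CMDP in \eqref{eq:mainO}, since \cref{as:reset-free is feasible.} guarantees that a reset-free Markovian policy exists for every episode.

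Next I would show $\pi^*\in\Dfree$, i.e., that $\pi^*$ is reset-free on every initial state in the sequence. This follows from the first inequality in \eqref{eq:saddle-point}: taking $\lambda(s_1)$ arbitrarily large forces $V^{\pi^*}_{c,1}(s_1)\leq 0$, while the non-negativity of the cost $c_h(s,a)=\mathds{1}[s\in\Sreset]$ gives $V^{\pi^*}_{c,1}(s_1)\geq 0$; hence $V^{\pi^*}_{c,1}(s_1)=0$ for every feasible $s_1$, and in particular for each $s_1^k$.

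Then I would establish optimality of $\pi^*$ among reset-free policies. Substituting $V^{\pi^*}_{c,1}(s_1^k)=0$ into the second inequality in \eqref{eq:saddle-point} yields $V^{\pi^*}_{r,1}(s_1^k)\geq V^{\pi}_{r,1}(s_1^k)-\hat{\lambda}(s_1^k)V^{\pi}_{c,1}(s_1^k)$ for every $\pi\in\Delta$. Restricting to $\pi\in\Dfree$, where $V^{\pi}_{c,1}(s_1^k)=0$, the dual term vanishes and leaves $V^{\pi^*}_{r,1}(s_1^k)\geq V^{\pi}_{r,1}(s_1^k)$. Summing over $k$ and using $\pi^*\in\Dfree$ from the previous step shows that $\pi^*$ achieves the maximum, which completes the proof once substituted back into the definition of $\text{Regret}(K)$.

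I do not expect a serious obstacle here: the corollary is essentially a direct unpacking of the saddle-point guarantee. The one point requiring care is ensuring that both inequalities are applied at the same fixed $s_1^k$ and that the cost value function vanishes on both sides of the comparison---for $\pi^*$ via its feasibility, and for the comparator via its membership in $\Dfree$---so that the term $\hat{\lambda}(s_1^k)V^{\,\cdot}_{c,1}(s_1^k)$ drops out and the inequality reduces to a clean comparison of reward value functions alone.
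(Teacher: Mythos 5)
Your proof is correct, but it reaches the key fact---that $\pi^*$ maximizes $\sum_{k=1}^K V^{\pi}_{r,1}(s_1^k)$ over $\Dfree$---by a genuinely different route than the paper. The paper's proof opens up the construction of $\pi^*$: it invokes \cref{lm:optimality of restricted policies main} to identify $\Dfree$ with the set of policies supported on the restricted action sets $\overline{\mathcal{A}}_s$, and then appeals to the definition of $\pi^*$ as the optimal policy of the restricted MDP $\overline{\mathcal{M}}$. You instead treat \cref{th:existence of shared saddle-point} as a black box and extract everything from the two inequalities in \eqref{eq:saddle-point}: the first (with $\lambda(s_1^k)$ taken large, combined with $c_h \geq 0$) gives $V^{\pi^*}_{c,1}(s_1^k)=0$ and hence $\pi^*\in\Dfree$, and the second (restricted to comparators in $\Dfree$, where the dual term vanishes) gives optimality. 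Both steps are sound, and you correctly flag the one prerequisite---that each $s_1^k$ is a feasible initial state via \cref{as:reset-free is feasible.}, so the saddle-point inequalities actually apply there. The trade-off: the paper's argument is shorter given the machinery already built inside the proof of the theorem, whereas yours is more modular---it shows the corollary holds for \emph{any} pair $(\pi^*,\hat{\lambda})$ satisfying \eqref{eq:saddle-point}, independent of how that saddle-point was constructed.
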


We prove for ease of construction that the pair $(\pi^*, \lambda^*)$ where $\lambda^*(\cdot) = \hat{\lambda}(\cdot)+1$ is also a saddle-point. 
\begin{restatable}{corollary}{lambdaPlus}
\label{lemma:additional saddle pt}
For any saddle-point to the CMDPs 
\[
    \max_{\pi \in \Delta}  V^\pi_{r,1}(s_1), \text{ s.t. } V^\pi_{c,1}(s_1) \leq 0
\]
of $(\pi^*, \hat{\lambda})$ from \cref{th:existence of shared saddle-point}, $(\pi^*, \lambda^*) \eqqcolon (\pi^*, \hat{\lambda}+1)$ is also a saddle-point as defined in \eqref{eq:saddle-point}.
\end{restatable}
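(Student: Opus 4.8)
The plan is to verify the two saddle-point inequalities of \eqref{eq:saddle-point} directly for the pair $(\pi^*,\lambda^*)$ with $\lambda^* = \hat\lambda + 1$, exploiting the feasibility of $\pi^*$ together with the non-negativity of the cost $c$. The whole argument rests on one intermediate fact, which I would establish first: $V^{\pi^*}_{c,1}(s_1) = 0$ for every feasible initial state $s_1$. Since $\pi^*$ satisfies the constraint of each feasible CMDP by \cref{th:existence of shared saddle-point}, we have $V^{\pi^*}_{c,1}(s_1)\le 0$; on the other hand, because $c_h(s,a)=\mathds{1}[s\in\Sreset]\ge 0$, the cost value function is an expectation of a sum of non-negative terms and hence $V^{\pi^*}_{c,1}(s_1)\ge 0$. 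Combining the two forces $V^{\pi^*}_{c,1}(s_1)=0$.

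Next I would dispatch the left inequality, namely that $\lambda^*$ minimizes $V^{\pi^*}_{r,1}(s_1) - \lambda(s_1)V^{\pi^*}_{c,1}(s_1)$ over $\lambda\ge 0$. Because $V^{\pi^*}_{c,1}(s_1)=0$, the map $\lambda\mapsto V^{\pi^*}_{r,1}(s_1) - \lambda(s_1)V^{\pi^*}_{c,1}(s_1)$ is constant and equal to $V^{\pi^*}_{r,1}(s_1)$, so the left inequality holds with equality for $\lambda^*$ exactly as it did for $\hat\lambda$, and there is nothing further to check.

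Finally I would verify the right inequality (the maximization over $\pi$). Using $V^{\pi^*}_{c,1}(s_1)=0$ once more, the left-hand side for $\lambda^*$ again equals $V^{\pi^*}_{r,1}(s_1)$. The original right inequality for $\hat\lambda$ already gives $V^{\pi^*}_{r,1}(s_1)\ge V^{\pi}_{r,1}(s_1) - \hat\lambda(s_1)V^{\pi}_{c,1}(s_1)$ for every $\pi\in\Delta$, and since $V^{\pi}_{c,1}(s_1)\ge 0$ by non-negativity of $c$, subtracting the extra term $V^{\pi}_{c,1}(s_1)$ only decreases the right-hand side. Concretely, $V^{\pi}_{r,1}(s_1) - \lambda^*(s_1)V^{\pi}_{c,1}(s_1) = \bigl(V^{\pi}_{r,1}(s_1) - \hat\lambda(s_1)V^{\pi}_{c,1}(s_1)\bigr) - V^{\pi}_{c,1}(s_1) \le V^{\pi^*}_{r,1}(s_1)$, which is precisely the required inequality for $(\pi^*,\lambda^*)$.

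The argument is short and the computations are routine; the single point that carries all the weight is the intermediate claim $V^{\pi^*}_{c,1}(s_1)=0$, which collapses the multiplier dependence on the $\pi^*$ side and makes the perturbation $\hat\lambda\mapsto\hat\lambda+1$ harmless. I therefore expect the main obstacle to be purely presentational—stating the quantifier over all feasible $s_1$ uniformly and recalling that $\hat\lambda$ is a \emph{function} of the initial state—rather than any genuine technical difficulty.
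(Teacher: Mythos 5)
Your proposal is correct and follows essentially the same route as the paper's own proof: both arguments dispose of the left inequality by noting $V^{\pi^*}_{c,1}(s_1)=0$, and both establish the right inequality by observing that $V^{\pi}_{c,1}(s_1)\geq 0$ makes the passage from $\hat{\lambda}$ to $\hat{\lambda}+1$ only decrease $V^{\pi}_{r,1}(s_1)-\lambda(s_1)V^{\pi}_{c,1}(s_1)$, after which the saddle-point property of $(\pi^*,\hat{\lambda})$ finishes the job. Your explicit derivation of $V^{\pi^*}_{c,1}(s_1)=0$ from feasibility plus non-negativity of the cost is a detail the paper leaves implicit (it is established inside the proof of \cref{th:existence of shared saddle-point}), but the substance is identical.
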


Therefore, the pair $(\pi^*, \lambda^*)$ in \cref{lemma:additional saddle pt} is a saddle-point to all the CMDPs the agent faces. This makes potentially designing a two-player game reduction possible. Now we give the details of our construction.

\vspace{-2mm}
\paragraph{Two-Player Game}
Our two-player game proceeds iteratively in the following manner: 
in episode $k$, a dual player determines a state value function $\lambda^k:\mathcal{S}\to\mathbb{R}$, and then, a primal player determines a policy $\pi^k$ which can depend on $\lambda^k$. The primal and dual players then receive losses $\Lagr^k(\pi^k, \lambda)$ and $-\Lagr^k(\pi, \lambda^k)$, respectively, where $\Lagr^k(\pi, \lambda)$ is a Lagrangian function defined as 
\begin{equation}\label{eq:lagrangian}
    \Lagr^k(\pi, \lambda) := V^\pi_{r,1}(s_1^k) - \lambda(s_1^k) V^\pi_{c,1}(s^k_1).
\end{equation}
The regret of these two players are defined as follows.
\begin{definition} \label{regretXY}
Let $\pi_c$ and $\lambda_c$ be comparators. The regret of the primal and the dual players are
\begin{align*}
    R_p(\{\pi^k\}_{k=1}^K, \pi_c) \coloneqq \textstyle \sum_{k=1}^K \Lagr^k(\pi_c, \lambda^k) - \Lagr^k(\pi^{k}, \lambda^k), \\
    R_d(\{\lambda^k\}_{k=1}^K, \lambda_c) \coloneqq  \textstyle \sum_{k=1}^K \Lagr^k(\pi^k, \lambda^k) - \Lagr^k(\pi^k, \lambda_c).
\end{align*}
\end{definition}
We present our main reduction theorem  below. 

\begin{restatable}{theorem}{regretR}
\label{th:regret reduction}
Under \cref{as:reset-free is feasible.}, for any sequences $\{\pi^k\}_{k=1}^K$ and $\{\lambda^k\}_{k=1}^K$ , it holds that
\begin{align*}
    \text{Regret}(K) &\leq R_p(\{\pi^k\}_{k=1}^K, \pi^*) + R_d(\{\lambda^k\}_{k=1}^K, 0)\\
\text{Resets}(K) &\leq R_p(\{\pi^k\}_{k=1}^K, \pi^*) +R_d(\{\lambda^k\}_{k=1}^K, \lambda^*)
\end{align*}
where $(\pi^*,\lambda^*)$ is the saddle-point defined in \cref{lemma:additional saddle pt}.
\end{restatable}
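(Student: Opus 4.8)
The plan is to bound each of the two performance measures separately, in both cases by inserting and canceling the iterate terms $\Lagr^k(\pi^k,\lambda^k)$ so that the regret definitions of \cref{regretXY} appear. First I would recall from \cref{lm:regret equivalence} that $\text{Regret}(K) = \sum_{k=1}^K V_{r,1}^{\pi^*}(s_1^k) - V_{r,1}^{\pi^k}(s_1^k)$, which already replaces the abstract $\max_{\pi\in\Dfree}$ comparator by the fixed shared policy $\pi^*$ from \cref{th:existence of shared saddle-point}. The key observation is that since $\pi^*$ is reset-free (feasible for every CMDP in the sequence by \cref{as:reset-free is feasible.}), we have $V^{\pi^*}_{c,1}(s_1^k)=0$, so $\Lagr^k(\pi^*,\lambda^k) = V_{r,1}^{\pi^*}(s_1^k)$ for every choice of $\lambda^k$; this is what lets the Lagrangian evaluated at the comparator collapse back to the plain reward value.

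For the regret bound I would write
\begin{align*}
\text{Regret}(K)
&= \textstyle\sum_{k=1}^K V_{r,1}^{\pi^*}(s_1^k) - V_{r,1}^{\pi^k}(s_1^k)\\
&= \textstyle\sum_{k=1}^K \Lagr^k(\pi^*,\lambda^k) - V_{r,1}^{\pi^k}(s_1^k),
\end{align*}
and then add and subtract $\Lagr^k(\pi^k,\lambda^k)$. The term $\sum_k \Lagr^k(\pi^*,\lambda^k) - \Lagr^k(\pi^k,\lambda^k)$ is exactly $R_p(\{\pi^k\},\pi^*)$. The leftover term $\sum_k \Lagr^k(\pi^k,\lambda^k) - V_{r,1}^{\pi^k}(s_1^k) = \sum_k \Lagr^k(\pi^k,\lambda^k) - \Lagr^k(\pi^k,0)$, since setting $\lambda=0$ kills the cost term and leaves $V_{r,1}^{\pi^k}(s_1^k)$; by \cref{regretXY} this is $R_d(\{\lambda^k\},0)$. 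This yields the first inequality (in fact as an equality, which a fortiori gives the stated $\le$).

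For the resets bound I would start from $\text{Resets}(K) = \sum_{k=1}^K V_{c,1}^{\pi^k}(s_1^k)$, established in the CMDP reformulation. The trick is to relate the cost value to the Lagrangian gap at the comparator $\lambda^*$. Using $V^{\pi^*}_{c,1}(s_1^k)=0$ again together with the saddle-point inequality \eqref{eq:saddle-point} for $\pi^*$, I would show $V_{r,1}^{\pi^*}(s_1^k) \ge \Lagr^k(\pi^k,\lambda^*)$, i.e. the primal optimal reward dominates the Lagrangian of any $\pi^k$ at $\lambda^*$; expanding $\Lagr^k(\pi^k,\lambda^*)=V_{r,1}^{\pi^k}(s_1^k)-\lambda^*(s_1^k)V_{c,1}^{\pi^k}(s_1^k)$ and recalling $\lambda^*=\hat\lambda+1\ge 1$ converts the cost term into a lower bound on $V_{c,1}^{\pi^k}$. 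Concretely, summing and regrouping gives $\text{Resets}(K)=\sum_k V_{c,1}^{\pi^k}(s_1^k) \le \sum_k \big(\Lagr^k(\pi^*,\lambda^k)-\Lagr^k(\pi^k,\lambda^k)\big) + \sum_k \big(\Lagr^k(\pi^k,\lambda^k)-\Lagr^k(\pi^k,\lambda^*)\big)$, where the first group is $R_p(\{\pi^k\},\pi^*)$ and the second is $R_d(\{\lambda^k\},\lambda^*)$.

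The main obstacle I expect is the resets bound: getting the coefficient in front of $V_{c,1}^{\pi^k}$ to be exactly $1$ rather than $\lambda^*$, which is precisely why the paper bothered to shift the multiplier to $\lambda^*=\hat\lambda+1$ in \cref{lemma:additional saddle pt}. The careful step is to use $V^{\pi^*}_{c,1}(s_1^k)=0$ to write $\Lagr^k(\pi^*,\lambda^k)=\Lagr^k(\pi^*,\hat\lambda)=V_{r,1}^{\pi^*}(s_1^k)$, then apply the second saddle-point inequality in \eqref{eq:saddle-point} to compare against $\pi^k$ at the shifted multiplier so that the extra $+1$ contributes exactly $V_{c,1}^{\pi^k}(s_1^k)$ while the $\hat\lambda$ part is absorbed into the regret terms. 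Everything else is the routine add-subtract bookkeeping of telescoping the Lagrangian through the iterates.
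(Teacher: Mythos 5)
Your proposal is correct and follows essentially the same route as the paper: both bounds come from inserting $\Lagr^k(\pi^k,\lambda^k)$, using $V^{\pi^*}_{c,1}(s_1^k)=0$ (via \cref{as:reset-free is feasible.} and \cref{lm:regret equivalence}) to collapse $\Lagr^k(\pi^*,\cdot)$ to $V^{\pi^*}_{r,1}(s_1^k)$, and exploiting the identity $V^{\pi^k}_{c,1}(s_1^k)=\Lagr^k(\pi^k,\hat\lambda)-\Lagr^k(\pi^k,\lambda^*)$ together with the saddle-point inequality exactly as in the paper's Lemmas \ref{diffLemma}--\ref{decompR}. Your added observation that the first bound holds with equality is also consistent with the paper's argument.
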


\paragraph{Designing Reset-free RL Algorithms by Reduction}
By \cref{th:regret reduction}, if both players have sublinear regret in the two-player game, then the resulting policy sequence will have sublinear performance regret and a sublinear number of resets in the original RL problem. 
Minimization of regret in a two-player game and solving saddle-point problems have been very well studied (see, e.g., \citep{nguyen18,benzi05,hazan2016introduction}). Therefore, our reduction gives a constructive and provably correct template for designing reset-free RL algorithms. 
It should be noted however, that the resultant two-player game here is slightly different from the classical full-information online learning problems. Unlike in these problems, the learning agent needs to actively explore the environment in each episode to collect information about the unknown dynamics, reward, and cost (i.e., which states are reset-free). Specifically \cref{th:regret reduction} needs the primal (policy) player to solve an online MDP sequence (with the same transition dynamics but varying reward functions due to changing $\lambda$), and the dual player to solve an online linear problem.
In the next section, we will give an example algorithm under this reduction for linear MDPs.
\subsection{Proof Sketches}

\paragraph{Proof Sketch of \cref{th:existence of shared saddle-point}}

Let $Q_c^*(s,a) = \min_{\pi\in\Delta} Q_c^\pi(s,a)$ and $V_c^*(s) = \min_{\pi\in\Delta} V_c^\pi(s)$. 
We define $\pi^*$ in \cref{th:existence of shared saddle-point} as the optimal policy to the following MDP: $(\mathcal{S}, \overline{\mathcal{A}}, P, r, H)$, where we define a state-dependent action space $\overline{\mathcal{A}}$ as 
\begin{align*}
    \overline{\mathcal{A}}_s = \{ a \in \mathcal{A} :  Q_c^*(s,a) \leq V_c^*(s) \}.
\end{align*}
By definition, $\overline{\mathcal{A}}_s $ is non-empty for all $s$. 

We also define a shorthand notation: we write $ \pi \in \overline{\mathcal{A}}(s)$ if 
$\mathbb{E}_{\pi}[ \sum_{t=1}^H \mathds{1}\{ a_t \notin   \overline{\mathcal{A}}_{s_t} \} | s_1 =s ] = 0$. We have the following lemma, which is an application of the performance difference lemma (see Lemma $6.1$ in~\citep{kakade2002approximately} and Lemma A.1 in \citep{cheng2021heuristic}).
\begin{restatable}{lemma}{restrPolicy}
\label{lm:optimality of restricted policies main}
For any $s_1 \in \mathcal{S}$ such that $V_c^* (s_1)=0$ and any $\pi\in\Delta$, it is true that $ \pi \in \overline{\mathcal{A}}(s_1)$ if and only if $V_c^\pi(s_1)=0$.
\end{restatable}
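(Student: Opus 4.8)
The plan is to reduce the statement to the performance difference lemma applied with respect to the \emph{cost-optimal} value functions. First I would record the dynamic-programming facts that make $\overline{\mathcal{A}}_s$ meaningful: since this is a finite-horizon MDP, the minimal cost-to-go obeys a Bellman optimality recursion, so $V_c^*(s) = \min_{a\in\mathcal{A}} Q_c^*(s,a)$ and there is a deterministic Markovian policy $\pi_c^*$ realizing it, with $V_c^{\pi_c^*} = V_c^*$ and $Q_c^{\pi_c^*} = Q_c^*$. Defining the cost advantage $A_c^*(s,a) := Q_c^*(s,a) - V_c^*(s)$, the $\min$ structure gives $Q_c^*(s,a) \geq V_c^*(s)$ for all $(s,a)$, hence $A_c^*(s,a) \geq 0$; moreover the defining inequality $Q_c^*(s,a) \leq V_c^*(s)$ of $\overline{\mathcal{A}}_s$ is then forced to be an equality, so $a \in \overline{\mathcal{A}}_s$ exactly when $A_c^*(s,a) = 0$.

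Next I would invoke the performance difference lemma referenced in the statement with comparator $\pi_c^*$, which yields, for any $\pi\in\Delta$ and any $s_1$,
\[
V_c^\pi(s_1) - V_c^*(s_1) = \mathbb{E}_\pi\Big[\textstyle\sum_{t=1}^H A_c^*(s_t,a_t)\,\big|\,s_1\Big].
\]
Specializing to a state with $V_c^*(s_1) = 0$ collapses the left-hand side to $V_c^\pi(s_1)$, so $V_c^\pi(s_1)$ equals the $\pi$-expected sum of the \emph{non-negative} advantages accumulated along the trajectory.

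Both directions then follow from the elementary fact that a non-negative random variable has zero expectation iff it vanishes almost surely. For the forward direction, if $\pi \in \overline{\mathcal{A}}(s_1)$ then $\mathbb{E}_\pi[\sum_t \mathds{1}\{a_t \notin \overline{\mathcal{A}}_{s_t}\}] = 0$; since this counts a non-negative integer, almost every trajectory uses only actions with $a_t \in \overline{\mathcal{A}}_{s_t}$, on which $A_c^*(s_t,a_t) = 0$, and the displayed identity gives $V_c^\pi(s_1) = 0$. For the converse, $V_c^\pi(s_1) = 0$ forces the expected sum of the terms $A_c^*(s_t,a_t) \geq 0$ to vanish, hence $A_c^*(s_t,a_t) = 0$ almost surely for every $t$; by the characterization above this is exactly $a_t \in \overline{\mathcal{A}}_{s_t}$ almost surely, i.e. $\pi \in \overline{\mathcal{A}}(s_1)$. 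I would also note that the fictitious absorbing state $s^\dagger$ is consistent here, since $c\equiv 0$ there gives $V_c^*(s^\dagger) = 0$ and $\overline{\mathcal{A}}_{s^\dagger} = \mathcal{A}$.

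The step I expect to require the most care is the first one: justifying that $V_c^*$ and $Q_c^*$, defined as pointwise minima over \emph{all} Markovian policies, genuinely satisfy the Bellman optimality recursion and are simultaneously realized by a single policy $\pi_c^*$, so that the performance difference lemma — which is phrased relative to one comparator policy — applies with $A^{\pi_c^*} = A_c^*$. Everything after that is bookkeeping with non-negative quantities, and it is worth emphasizing that the non-negativity of $A_c^*$, which is precisely what couples ``zero cost'' to ``only safe actions,'' is inherited from the $\min$ structure rather than from any Slater-type condition.
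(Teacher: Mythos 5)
Your proposal is correct and follows essentially the same route as the paper's proof: both apply the performance difference lemma with the cost-optimal comparator to get $V_c^\pi(s_1) - V_c^*(s_1) = \mathbb{E}_\pi\bigl[\sum_{t=1}^H Q_c^*(s_t,a_t) - V_c^*(s_t)\bigr]$ and then exploit the non-negativity of the optimal cost advantage to equate ``zero expected cost'' with ``only actions in $\overline{\mathcal{A}}_{s_t}$ almost surely.'' Your write-up is somewhat more explicit than the paper's (spelling out the Bellman recursion, the forced equality $Q_c^*(s,a)=V_c^*(s)$ on $\overline{\mathcal{A}}_s$, and the absorbing state $s^\dagger$), but these are elaborations of the same argument, not a different one.
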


We prove our main claim, \eqref{eq:saddle-point}, below. 
Because $V^{\pi^*}_{c,1}(s_1)  = 0$, the first inequality is trivial: 
$
    V^{\pi^*}_{r,1}(s_1) - \lambda(s_1) V^{\pi^*}_{c,1}(s_1)  
    = V^{\pi^*}_{r,1}(s_1)
    = V^{\pi^*}_{r,1}(s_1) - \hat{\lambda}(s_1) V^{\pi^*}_{c,1}(s_1) 
$.

To prove the second inequality, we use \cref{lm:optimality of restricted policies main}: 
\begin{align*}
     &V^{\pi}_{r,1}(s_1) - \hat{\lambda}(s_1) V^{\pi}_{c,1}(s_1) \\
     \leq 
     &\max_{\pi\in\Delta} V^{\pi}_{r,1}(s_1) - \hat{\lambda}(s_1) V^{\pi}_{c,1}(s_1) \\
     = &\min_{y\geq 0} \max_{\pi \in \Delta}  V^\pi_{r,1}(s_1) - y V^\pi_{c,1}(s_1)     \\
    =  &\max_{\pi \in \overline{\mathcal{A}_c}(s_1)}  V^\pi_{r,1}(s_1) & \textrm{(By  \cref{lm:optimality of restricted policies main} )}\\
    = &V^{\pi^*}_{r,1}(s_1) = V^{\pi^*}_{r,1}(s_1) - \hat{\lambda}(s_1) V^{\pi^*}_{c,1}(s_1). 
\end{align*}

\paragraph{Proof Sketch of \cref{th:regret reduction}}

We first establish the following intermediate result that will help us with our decomposition. 
\begin{restatable}{lemma}{rdiffLemma}
\label{diffLemma}
For any primal-dual sequence $\{\pi^k, \lambda^k \}_{k=1}^K$, $\sum_{k=1}^K (\Lagr^k(\pi^*, \lambda') - \Lagr^k(\pi^k, \lambda^k)) \leq R_p(\{\pi\}_{k=1}^K, \pi^*)$, where $(\pi^*,\lambda')$ is the saddle-point defined in either \cref{th:existence of shared saddle-point} or \cref{lemma:additional saddle pt}.
\end{restatable}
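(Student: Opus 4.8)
The plan is to reduce the claimed inequality to the single structural fact that $\pi^*$ is reset-free from every realized initial state, i.e.\ $V^{\pi^*}_{c,1}(s_1^k)=0$ for all $k$. I would start by unfolding the definition of the primal regret with comparator $\pi^*$, namely $R_p(\{\pi^k\}_{k=1}^K,\pi^*)=\sum_{k=1}^K \Lagr^k(\pi^*,\lambda^k)-\Lagr^k(\pi^k,\lambda^k)$. Comparing this against the left-hand side $\sum_{k=1}^K \Lagr^k(\pi^*,\lambda')-\Lagr^k(\pi^k,\lambda^k)$, the terms $-\Lagr^k(\pi^k,\lambda^k)$ appear identically in both, so they cancel and it suffices to establish $\sum_{k=1}^K \big(\Lagr^k(\pi^*,\lambda')-\Lagr^k(\pi^*,\lambda^k)\big)\le 0$.

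Next I would expand this difference using the Lagrangian definition in \eqref{eq:lagrangian}. Because the reward term $V^{\pi^*}_{r,1}(s_1^k)$ is common to both $\Lagr^k(\pi^*,\lambda')$ and $\Lagr^k(\pi^*,\lambda^k)$ and only the dual variable differs, each summand collapses to $\big(\lambda^k(s_1^k)-\lambda'(s_1^k)\big)\,V^{\pi^*}_{c,1}(s_1^k)$. Thus the whole sum is governed entirely by the constraint value of $\pi^*$ evaluated at the (adaptively chosen) initial states $s_1^k$.

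The crux is then to argue that $V^{\pi^*}_{c,1}(s_1^k)=0$ for every episode $k$. This follows by combining \cref{as:reset-free is feasible.}, which guarantees that each $s_1^k$ admits a reset-free Markovian policy and hence — since the cost $c$ is nonnegative — that $V_c^*(s_1^k)=0$, with the construction of $\pi^*$ in \cref{th:existence of shared saddle-point} as a policy supported on the restricted action sets $\overline{\mathcal A}$ together with \cref{lm:optimality of restricted policies main}, which certifies that any such restricted policy attains zero constraint value whenever $V_c^*(s_1^k)=0$. With $V^{\pi^*}_{c,1}(s_1^k)=0$ in hand, every summand vanishes, so the difference is in fact exactly $0$, which is in particular $\le 0$, proving the lemma.

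Finally, I would observe that the argument never uses the specific value of $\lambda'$ — only that its contribution is annihilated by $V^{\pi^*}_{c,1}(s_1^k)=0$ — so it applies verbatim to both admissible saddle points, the pair $(\pi^*,\hat\lambda)$ from \cref{th:existence of shared saddle-point} and the pair $(\pi^*,\lambda^*)$ from \cref{lemma:additional saddle pt}, exactly as the statement requires. The only step I expect to carry genuine content is the reset-freeness of $\pi^*$; this is essentially inherited from the existence result and \cref{lm:optimality of restricted policies main}, while the remainder is just the cancellation bookkeeping in the definition of $R_p$.
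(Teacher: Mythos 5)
Your proposal is correct and follows essentially the same route as the paper: the paper also adds and subtracts $\Lagr^k(\pi^*,\lambda^k)$ (equivalently, cancels the common $-\Lagr^k(\pi^k,\lambda^k)$ terms) and then bounds $\sum_k \Lagr^k(\pi^*,\lambda') - \Lagr^k(\pi^*,\lambda^k) \le 0$, citing the first inequality of the saddle-point property \eqref{eq:saddle-point}, which holds precisely because $V^{\pi^*}_{c,1}(s_1^k)=0$. You simply unfold that citation one level further, re-deriving $V^{\pi^*}_{c,1}(s_1^k)=0$ from \cref{as:reset-free is feasible.} and \cref{lm:optimality of restricted policies main}, and correctly observe that the difference is in fact exactly zero.
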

Then we upper bound Regret($K$) and Resets($K$) by $R_p(\{\pi^k\}_{k=1}^K, \pi_c)$ and $R_d(\{\lambda^k\}_{k=1}^K, \lambda_c)$ for suitable comparators. This decomposition is inspired by the techniques used in \citet{nguyen18}. We first bound Resets($K$).
\begin{restatable}{lemma}{rCV}
\label{lm:constraint violation}
For any primal-dual sequence $\{\pi^k, \lambda^k \}_{k=1}^K$, $\sum_{k=1}^K V_{c,1}^{\pi^k}(s^k_1) \leq R_p(\{\pi\}_{k=1}^K, \pi^*) + R_d(\{\lambda\}_{k=1}^K, \lambda^*) $, where $(\pi^*,\lambda^*)$ is the saddle-point defined in \cref{lemma:additional saddle pt}.
\end{restatable}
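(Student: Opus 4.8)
The plan is to combine the primal and dual regrets against the single shared comparator $(\pi^*,\lambda^*)$ from \cref{lemma:additional saddle pt} so that the online iterate terms cancel, and then to show that the residual telescoped quantity dominates $\sum_{k=1}^K V_{c,1}^{\pi^k}(s_1^k)$. Concretely, I would first invoke \cref{diffLemma} with the saddle-point $(\pi^*,\lambda^*)$, which gives $\sum_{k=1}^K(\Lagr^k(\pi^*,\lambda^*)-\Lagr^k(\pi^k,\lambda^k)) \le R_p(\{\pi^k\}_{k=1}^K,\pi^*)$. Separately, the definition of the dual regret against the comparator $\lambda_c = \lambda^*$ (\cref{regretXY}) reads $R_d(\{\lambda^k\}_{k=1}^K,\lambda^*) = \sum_{k=1}^K(\Lagr^k(\pi^k,\lambda^k)-\Lagr^k(\pi^k,\lambda^*))$. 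Adding these two relations makes the iterate Lagrangian values $\Lagr^k(\pi^k,\lambda^k)$ cancel, leaving $R_p(\{\pi^k\}_{k=1}^K,\pi^*)+R_d(\{\lambda^k\}_{k=1}^K,\lambda^*) \ge \sum_{k=1}^K(\Lagr^k(\pi^*,\lambda^*)-\Lagr^k(\pi^k,\lambda^*))$.

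The second step is to unpack the right-hand summand. Since $\pi^*$ is the reset-free policy from \cref{th:existence of shared saddle-point}, it satisfies $V_{c,1}^{\pi^*}(s_1^k)=0$ for every feasible initial state, so $\Lagr^k(\pi^*,\lambda^*)=V_{r,1}^{\pi^*}(s_1^k)$. Using $\lambda^*=\hat\lambda+1$ I would expand $\Lagr^k(\pi^k,\lambda^*)=V_{r,1}^{\pi^k}(s_1^k)-\hat\lambda(s_1^k)V_{c,1}^{\pi^k}(s_1^k)-V_{c,1}^{\pi^k}(s_1^k)$ and peel off the coefficient-$1$ term, so that each summand becomes $V_{c,1}^{\pi^k}(s_1^k)$ plus the residual $V_{r,1}^{\pi^*}(s_1^k)-\left(V_{r,1}^{\pi^k}(s_1^k)-\hat\lambda(s_1^k)V_{c,1}^{\pi^k}(s_1^k)\right)$.

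The crux of the argument — and the step I expect to require the most care — is to recognize that this residual is nonnegative. It is precisely the second saddle-point inequality of \eqref{eq:saddle-point} instantiated at $\pi=\pi^k$ and $s_1=s_1^k$: because $V_{c,1}^{\pi^*}(s_1^k)=0$, that inequality reduces to $V_{r,1}^{\pi^*}(s_1^k) \ge V_{r,1}^{\pi^k}(s_1^k)-\hat\lambda(s_1^k)V_{c,1}^{\pi^k}(s_1^k)$, so the residual is $\ge 0$. Dropping this nonnegative term yields $\Lagr^k(\pi^*,\lambda^*)-\Lagr^k(\pi^k,\lambda^*)\ge V_{c,1}^{\pi^k}(s_1^k)$; summing over $k$ and chaining with the inequality from the first step gives $\sum_{k=1}^K V_{c,1}^{\pi^k}(s_1^k)\le R_p(\{\pi^k\}_{k=1}^K,\pi^*)+R_d(\{\lambda^k\}_{k=1}^K,\lambda^*)$, as claimed. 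The two structural facts that make everything fit are the feasibility $V_{c,1}^{\pi^*}(s_1^k)=0$ of the shared saddle-point policy and the deliberate $+1$ shift defining $\lambda^*$, which is exactly what converts the Lagrangian gap into the cost value $V_{c,1}^{\pi^k}$.
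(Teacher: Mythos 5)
Your proof is correct and is essentially the paper's argument reorganized: both rest on the same three ingredients — the identity that the $+1$ shift in $\lambda^*=\hat\lambda+1$ turns the Lagrangian gap into $V_{c,1}^{\pi^k}(s_1^k)$, the add-and-subtract of $\Lagr^k(\pi^k,\lambda^k)$ combined with \cref{diffLemma} and the definition of $R_d$, and the second saddle-point inequality of \eqref{eq:saddle-point} together with $V_{c,1}^{\pi^*}(s_1^k)=0$. The only cosmetic difference is that the paper applies the saddle-point inequality to $\Lagr^k(\cdot,\hat\lambda)$ before invoking \cref{diffLemma} with $\lambda'=\hat\lambda$, whereas you invoke \cref{diffLemma} with $\lambda'=\lambda^*$ first and apply the same inequality at the end to discard the nonnegative residual.
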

\begin{proof}
Notice $\sum_{k=1}^K V_{c,1}^{\pi^k}(s^k_1) = \sum_{k=1}^K \Lagr^k(\pi^k, \hat{\lambda}) - \Lagr^k (\pi^k, \lambda^{*})$ where $(\pi^*,\hat{\lambda})$ is the saddle-point defined in \cref{th:existence of shared saddle-point}. By \eqref{eq:saddle-point}, and adding and subtracting $\sum_{k=1}^K \Lagr^k(\pi^k, \lambda^k)$, we can bound this difference by
\[
\textstyle \sum_{k=1}^K \Lagr^k(\pi^*, \hat{\lambda}) - \Lagr^k(\pi^k, \lambda^k) + \Lagr^k(\pi^k, \lambda^k) - \Lagr^k(\pi^k, \lambda^{*}).
\]
Using~\Cref{diffLemma} and \cref{regretXY} to upper bound the above, we get the desired result.
\end{proof}
Lastly, we bound Regret$(K)$ with the lemma below and \cref{lm:regret equivalence}.
\begin{restatable}{lemma}{rregret}
\label{decompR}
For any primal-dual sequence $\{\pi^k, \lambda^k \}_{k=1}^K$, $\sum_{k=1}^K (V_{r,1}^{\pi^*}(s_1^k) - V_{r,1}^{\pi^k}(s_1^k)) \leq R_p(\{\pi\}_{k=1}^K, \pi^*) + R_d(\{\lambda\}_{k=1}^K, 0)$, where $(\pi^*,\lambda^*)$ is the saddle-point defined in \cref{lemma:additional saddle pt}.
\end{restatable}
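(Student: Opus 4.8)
The plan is to express the reward gap $\sum_{k=1}^K \big(V_{r,1}^{\pi^*}(s_1^k) - V_{r,1}^{\pi^k}(s_1^k)\big)$ purely through the Lagrangian $\Lagr^k$ and then split it into the two players' regrets. The key enabling fact, already established in the proof of \cref{th:existence of shared saddle-point}, is that $\pi^*$ is reset-free on every feasible initial state: by \cref{as:reset-free is feasible.} each $s_1^k$ admits a reset-free policy, so $V_c^*(s_1^k)=0$, and by \cref{lm:optimality of restricted policies main} the policy $\pi^*$ (which only plays actions in $\overline{\mathcal{A}}$) satisfies $V_{c,1}^{\pi^*}(s_1^k)=0$. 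Consequently $\Lagr^k(\pi^*,\lambda)=V_{r,1}^{\pi^*}(s_1^k)$ for every multiplier $\lambda$, and in particular for the saddle-point dual $\lambda'$ from \cref{th:existence of shared saddle-point}. On the other side, taking the zero multiplier gives $\Lagr^k(\pi^k,0)=V_{r,1}^{\pi^k}(s_1^k)$ directly from the definition in \eqref{eq:lagrangian}.

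First I would use these two identities to rewrite the target sum as $\sum_{k=1}^K \big(\Lagr^k(\pi^*,\lambda') - \Lagr^k(\pi^k,0)\big)$. Then I would insert and subtract the realized Lagrangian value $\Lagr^k(\pi^k,\lambda^k)$ inside the sum, producing the decomposition $\sum_{k=1}^K \big(\Lagr^k(\pi^*,\lambda') - \Lagr^k(\pi^k,\lambda^k)\big) + \sum_{k=1}^K \big(\Lagr^k(\pi^k,\lambda^k) - \Lagr^k(\pi^k,0)\big)$. The first group is bounded by $R_p(\{\pi^k\}_{k=1}^K,\pi^*)$ via \cref{diffLemma}, and the second group is exactly $R_d(\{\lambda^k\}_{k=1}^K,0)$ by \cref{regretXY} with the dual comparator $\lambda_c=0$. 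Combining these two bounds yields the claim; composing the result with \cref{lm:regret equivalence} then produces the stated bound on $\text{Regret}(K)$.

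The step I expect to be the crux is not the algebra but justifying that $V_{c,1}^{\pi^*}(s_1^k)=0$ holds simultaneously for the entire adaptive sequence $\{s_1^k\}_{k=1}^K$, since it is this uniformity that lets a single Markovian $\pi^*$ serve as the comparator across all episodes and that makes $\Lagr^k(\pi^*,\cdot)$ collapse to the reward value. This is precisely where the zero-violation constraint and the shared-saddle-point structure are used; without it, the comparator term would carry a residual $-\lambda'(s_1^k)V_{c,1}^{\pi^*}(s_1^k)$ that could not be discarded. A second point to check is the choice of dual comparator: using $\lambda_c=0$ here (rather than $\lambda^*$, as in the Resets bound) is exactly what makes the second group telescope into the reward-only gap, and I would confirm that $0$ is an admissible comparator for $R_d$ so that \cref{regretXY} applies verbatim.
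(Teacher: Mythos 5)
Your proposal is correct and follows essentially the same route as the paper's proof: rewrite the reward gap as $\sum_k \Lagr^k(\pi^*,\lambda') - \Lagr^k(\pi^k,0)$ using $V_{c,1}^{\pi^*}(s_1^k)=0$, insert $\Lagr^k(\pi^k,\lambda^k)$, and bound the two groups by \cref{diffLemma} and the definition of $R_d$ with comparator $0$. The only cosmetic difference is that the paper writes the comparator Lagrangian with $\lambda^*$ while you use the generic $\lambda'$ from \cref{th:existence of shared saddle-point}; since \cref{diffLemma} covers both choices, this changes nothing.
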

\begin{proof}
Note that $\Lagr(\pi^*, \lambda^*) = \Lagr(\pi^*, 0)$ since $V_{c,1}^{\pi^*} = 0$ for all $k \in [K] = \{1,...,K \}$. Since by definition, for any $\pi$, $\Lagr^k(\pi, 0) = V_{r,1}^{\pi}(s_1^k)$, we have the following: 
\begin{align*}
&\textstyle \hspace{-4mm} \sum_{k=1}^K \hspace{-0.5mm} V_{r,1}^{\pi^*}(s_1^k) - V_{r,1}^{\pi^k}(s_1^k)\hspace{-0.5mm}= \hspace{-0.5mm}\sum_{k=1}^K \Lagr^k(\pi^*, \lambda^*) - \Lagr^k(\pi^k, 0) \\
    = &\textstyle \sum_{k=1}^K \Lagr^k(\pi^*, \lambda^*) \hspace{-0.5mm}-\hspace{-0.5mm} \Lagr^k(\pi^k, \lambda^k) \hspace{-0.5mm}+\hspace{-0.5mm} \Lagr^k(\pi^k, \lambda^k) \hspace{-0.5mm}-\hspace{-0.5mm} \Lagr^k(\pi^k, 0)\\
     \leq &R_p(\{\pi\}_{k=1}^K, \pi^*) + R_d(\{\lambda\}_{k=1}^K, 0)
\end{align*}
where the last inequality follows from~\Cref{diffLemma} and \cref{regretXY}. 
\end{proof}

\vspace{-5mm}
\section{Reset-Free Learning for Linear MDP} \label{sec:linear MDP}

To demonstrate the utility of our reduction, we design a provably correct reset-free algorithm instantiation for linear MDP. This example serves to ground our abstract framework and to illustrate concretely how an algorithm instantiating our framework might operate. We also aim to show that despite generality, our framework does not lose any optimality in the rates of regret and total number of resets when compared to similar but specialized works.


We consider a linear MDP setting, which is common in the RL theory literature~\citep{jin19, amani21, ghosh22}.
\begin{assumption}\label{as:linear mdp}
We assume $(S, A, P, r, c, H)$ is linear with a known feature map $\phi: S \times A \rightarrow \mathbb{R}^d$:  for any $h \in [H]$, there exists $d$ unknown signed measures $\mu_h = \{ \mu_h^1, ..., \mu_h^d\}$ over $S$ such that for any $(s,a, s') \in S \times A \times S$, we have
\begin{align*}
    P_h(s'|a) &= \langle \phi(s, a), \mu_h(s') \rangle,
\end{align*}
and there exists unknown vectors $\omega_{r,h}, \omega_{c,h} \in \mathbb{R}^d$ such that for any $(s,a) \in S \times A$,
\begin{align*}
    r_h(s,a) &= \langle \phi(s,a), \omega_{r,h} \rangle, 
    \hspace{3mm} c_h(s,a) = \langle \phi(s,a), \omega_{c,h} \rangle.
\end{align*}
We assume,  for all $(s, a, h) \in S \times A \times [H]$,  $||\phi(s,a)||_2 \leq 1$, and $\max\{||\mu_h(s)||_2, ||\omega_{r,h}||_2, ||\omega_{c,h}||_2\} \leq \sqrt{d}$.
\end{assumption}
Note that the above assumption on the cost function does not imply knowledge of the reset states nor some hidden structure of the reset states. At the high level, it merely asks that the feature is expressive enough to separate the reset states and reset-free states in a classification problem.

In addition, we make a  linearity assumption on the function $\lambda^*$ defined in \cref{th:existence of shared saddle-point}.
\begin{assumption} \label{as:bounded dual variable}
We assume the knowledge of a feature $\xi: \mathcal{S} \rightarrow \mathbb{R}^d$ such that $\forall s\in \mathcal{S}$,
 $||\xi(s)||_2 \leq 1$ and $\lambda^*(s)= \langle \xi(s), \theta^* \rangle$ for some unknown vector $\theta^* \in \mathbb{R}^d$. In addition, we assume the knowledge of a convex set\footnote{Such a set can be constructed by upper bounding the values using scaling and ensuring non-negativity by, e.g., sum of squares.} $\mathcal{U}\subseteq \mathbb{R}^d$ such that  $\theta^*, 0 \in \mathcal{U}$  
 and $\forall 
 \theta \in \mathcal{U}$, $||\theta||_2 \leq B$ and $\langle \xi(s), \theta\rangle \geq 0$.
 \footnote{
 From the previous section, we can see that the optimal function for the dual player is not necessarily unique. So, we assume bounds on at least one optimal function that we designate as $\lambda^*(s)$.
 }
\end{assumption}
 For a linear CMDP with a fixed initial state distribution, it is standard to assume that the optimal Lagrange multiplier (i.e., $\lambda^*(s_1)$) is bounded and non-negative. In this regard, our assumption is a mild and natural linear extension of this boundedness assumption for varying initial states, and is true when the feature map is expressive enough.
\vspace{-1mm}
\subsection{Algorithm}
\begin{algorithm*}[t]
\caption{Primal-Dual Reset-Free RL Algorithm for Linear MDP with Adaptive Initial States} \label{mainalg}
\begin{algorithmic}[1]
\STATE \textbf{Input}: Feature maps $\phi$ and $\xi$. Failure probability $p$. Some universal constant $\mathcal{C}$.
\STATE \textbf{Initialization:} $\theta^1 = 0$, $w_{r,h} = 0$, $w_{c,h} = 0$, $\alpha = \dfrac{\log(|\mathcal{A}|)K}{2(1+B+H)}$, $\beta = \mathcal{C}dH\sqrt{\log(4\log|\mathcal{A}|dKH/p)}$
\FOR {episodes $k = 1,...K$}
    \STATE Observe the initial state $s_1^k$.
    \FOR {step $h=1,...,H$}\label{ln:policy}
        \STATE Compute $\pi_{h,k}(a|\cdot) \gets \dfrac{\exp(\alpha (Q^k_{r,h}(\cdot, a)-\lambda^k(s_1^k)Q^k_{c,h}(\cdot ,a)))}{\sum_a \exp(\alpha (Q^k_{r,h}(\cdot, a) -\lambda^k(s_1^k)Q^k_{c,h}(\cdot, a)))}$.
        \STATE Take action $a^k_h \sim \pi_{h,k}(\cdot | s^k_h)$ and observe $s^k_{h+1}$.
    \ENDFOR \label{end:policy}
    \STATE $\eta_k \gets B/\sqrt{k}$ \label{ln:Y}
    \STATE Update $\theta^{k+1} \gets \text{Proj}_{\mathcal{U}}(\theta^{k} + \eta_k \cdot \xi(s^{k}_1) V^{k}_{c,1}(s^{k}_1) )$
    \STATE $\lambda^{k+1}(\cdot) \gets  \langle \theta^{k+1}, \xi(\cdot) \rangle$ \label{end:Y}
    \FOR{step $h = H,...,1$} \label{ln:primal}
        \STATE $\Lambda_h^{k+1}\gets \sum\limits_{i = 1}^{k} \phi(s^i_h, a^i_h)\phi(s^i_h, a^i_h)^T + \lambda \mathbb{I}$.
        \STATE $w^{k+1}_{r,h} \gets (\Lambda_h^{k+1})^{-1}[\sum\limits_{i = 1}^{k} \phi(s^i_h, a^i_h)[r_h(s^i_h, a^i_h) + V^{k+1}_{r,h+1}(s^i_{h+1})]]$
        \STATE $w^{k+1}_{c,h} \gets (\Lambda_h^{k+1})^{-1}[\sum\limits_{i = 1}^{k} \phi(s^i_h, a^i_h)[c_h(s^i_h, a^i_h) + V^{k+1}_{c,h+1}(s^i_{h+1})]]$
        \STATE $Q^{k+1}_{r,h}(\cdot, \cdot)\gets \max\{ \min \{\langle w^{k+1}_{r,h}, \phi(\cdot, \cdot) \rangle + \beta (\phi(\cdot, \cdot)^T(\Lambda^{k+1}_h)^{-1}\phi(\cdot, \cdot))^{1/2},H-h+1\}, 0\}$
        \STATE $Q^{k+1}_{c,h}(\cdot, \cdot) \gets \max\{ \min\{\langle w^{k+1}_{c,h}, \phi(\cdot, \cdot) \rangle- \beta (\phi(\cdot, \cdot)^T(\Lambda^{k+1}_h)^{-1}\phi(\cdot, \cdot))^{1/2}, 1 \}, 0 \}$ 
        \STATE $V^{k+1}_{r,h}(\cdot) = \sum_a \pi_{h,k}(a|\cdot)Q^{k+1}_{r,h}(\cdot, a)$
        \STATE $V^{k+1}_{c,h}(\cdot) = \sum_a \pi_{h,k}(a|\cdot)Q^{k+1}_{c,h}(\cdot, a)$
    \ENDFOR\label{end:primal}
\ENDFOR
\end{algorithmic}
\end{algorithm*}
The basis of our algorithm lies between the interaction between the primal and dual players.
We adopt the common no-regret plus best-response approach~\citep{wang2021no} to the no-regret two-player game in designing algorithms for these two players.
We let the dual player perform (no-regret) projected gradient descent and the primal player update policies based on upper confidence bound (UCB) with knowledge of the dual player's decision (i.e., the best-response scheme to the dual player). 

As discussed above, the environment is unknown to the agent. Therefore, as discussed in \cref{sec:reduction}, we need to modify the actions of the primal player as compared to in the generic saddle-point problem to instantiate our framework. We do this using UCB, where the agent views unknown actions/states in a more positive (``optimistic")\footnote{Note that the term ``optimism'' here refers to the use of overestimation in exploration, which is different from the usage in, e.g., optimistic mirror descent~\citep{mertikopoulos2018optimistic}.
} way than the ones that have been observed to drive exploration in unknown environments.



Specifically, in each episode, upon receiving the initial state, we execute actions according to the policy based on a softmax 
(lines \hyperref[ln:policy]{5}-\hyperref[end:policy]{8}). Then, we perform the dual update through projected gradient descent. The dual player plays for the next round, $k+1$, after observing its loss after the primal player plays for the current round, $k$. The projection is to a $l_2$ ball containing $\lambda^*(\cdot)$ 
(lines \hyperref[ln:Y]{9}-\hyperref[end:Y]{11}). Finally, we perform the update of the primal player by computing the $Q$-functions for both the reward and cost with a bonus to encourage exploration 
(lines \hyperref[ln:primal]{12}-\hyperref[end:primal]{20}).

This algorithm uses \citet{ghosh22} as the baseline for the primal player. However, we emphasize that our algorithm can handle the adaptive initial state sequence that is seen in reset-free RL \cref{th:existence of shared saddle-point,th:regret reduction}.
\vspace{-3mm}
\subsection{Analysis}
We show below that our algorithm achieves regret and number of resets that are sublinear in the total number of time steps, $KH$, using \cref{th:regret reduction}. This result is asymptotically equivalent to \citet{ghosh22} and comparable to the bounds of $\tilde{O}(\sqrt{d^2H^6K})$ from \citet{ding20}. 
Therefore, we do not make sacrifices in terms of the regret and total number of resets when specializing our abstract framework. 

\begin{restatable}{theorem}{rLinearMDP}
\label{th:linear mdp special case}
Under Assumptions~\ref{as:reset-free is feasible.}, \ref{as:linear mdp}, and \ref{as:bounded dual variable}, with high probability,
$\text{Regret}(K) \leq \tilde{O} ((B+1)\sqrt{d^3H^4K})$ and $\text{Resets}(K) \leq \tilde{O} ((B+1)\sqrt{d^3H^4K})$.
\end{restatable}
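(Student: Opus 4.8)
The plan is to invoke the reduction in \cref{th:regret reduction}, which bounds $\text{Regret}(K)$ by $R_p(\{\pi^k\},\pi^*)+R_d(\{\lambda^k\},0)$ and $\text{Resets}(K)$ by $R_p(\{\pi^k\},\pi^*)+R_d(\{\lambda^k\},\lambda^*)$, so that the entire task reduces to establishing $\tilde O((B+1)\sqrt{d^3H^4K})$ bounds on the primal regret $R_p$ against the fixed comparator $\pi^*$ and on the dual regret $R_d$ against the two comparators $\lambda_c=0$ and $\lambda_c=\lambda^*$. The crucial structural fact enabling this is \cref{th:existence of shared saddle-point}: even though the initial states $s_1^k$ are adaptive, there is a \emph{single} Markovian $\pi^*$ that is optimal across all of them, so $R_p$ is well-defined against one fixed comparator. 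Note also that both dual comparators correspond to $\theta=0$ and $\theta=\theta^*$, which lie in $\mathcal{U}$ by \cref{as:bounded dual variable}, so the same analysis covers both.

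For the dual regret, I would observe that $R_d(\{\lambda^k\},\lambda_c)=\sum_{k=1}^K \langle \xi(s_1^k),\theta_c-\theta^k\rangle\, V_{c,1}^{\pi^k}(s_1^k)$ is linear in the dual iterate, and that lines~9--11 of \cref{mainalg} are exactly projected online gradient ascent on $\langle \xi(s_1^k),\theta\rangle V_{c,1}^k(s_1^k)$ with step size $\eta_k=B/\sqrt{k}$. The standard OGD regret bound then controls $\sum_k \langle \xi(s_1^k),\theta_c-\theta^k\rangle V_{c,1}^k(s_1^k)$ at rate $O(BH\sqrt{K})$, using $\|\xi\|_2\le 1$, $\theta\in\mathcal U$ with $\|\theta\|_2\le B$, and $V_{c,1}^k\in[0,H]$. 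The only gap is that the gradient uses the \emph{estimated} $V_{c,1}^k$ rather than the true $V_{c,1}^{\pi^k}$; I would close it by writing $R_d$ as the OGD term plus $\sum_k \langle \xi(s_1^k),\theta_c-\theta^k\rangle\,(V_{c,1}^{\pi^k}(s_1^k)-V_{c,1}^k(s_1^k))$ and bounding the estimation error $|V_{c,1}^k-V_{c,1}^{\pi^k}|$ summed over $k$ by the linear-MDP UCB/elliptical-potential argument, which contributes $\tilde O(\sqrt{d^3H^4K})$.

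The primal regret is the main work and is inherited from the linear-CMDP analysis of \citet{ghosh22}, adapted to the composite reward. Within episode $k$ the scalar $\lambda^k(s_1^k)\in[0,B]$ is fixed, so the primal player solves an online MDP with reward $r_h-\lambda^k(s_1^k)c_h$, whose per-step range is controlled by $1+B$. I would decompose $R_p$ via the performance difference lemma into (i) an optimism term, showing with high probability that the clamped, bonus-inflated $Q_{r,h}^k-\lambda^k(s_1^k)Q_{c,h}^k$ overestimates the true composite $Q$-value of any policy (using the positive reward bonus and the negative cost bonus in lines~16--17, together with linear-MDP concentration of $w_{r,h}^k,w_{c,h}^k$), and (ii) an exponential-weights/mirror-descent term arising because $\pi_{h,k}$ in line~6 is a softmax over the optimistic composite $Q$. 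With $\alpha=\frac{\log|\mathcal A|K}{2(1+B+H)}$ balancing the $\tfrac{\log|\mathcal A|}{\alpha}$ and $\alpha(1+B+H)^2$ contributions, the softmax part is lower order, and the optimism part collapses to a sum of exploration bonuses $\beta(\phi^\top(\Lambda_h^k)^{-1}\phi)^{1/2}$ with $\beta=\tilde O(dH)$; by Cauchy--Schwarz and the elliptical potential lemma this sums to $\tilde O(H\sqrt{dK})$ per factor of $\beta$, giving the $\tilde O((B+1)\sqrt{d^3H^4K})$ rate.

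Combining the two bounds through \cref{th:regret reduction} yields the claim. The main obstacle is the primal optimism step: one must verify that the overestimation property holds \emph{uniformly} over the adaptively chosen initial states and over the $\lambda^k$-dependent composite reward, which requires the high-probability concentration (a union/covering argument over the value-function class and the dual iterates) to be martingale-valid with respect to the history-generated filtration. Because $\pi^*$ is a single fixed comparator and $\theta\in\mathcal U$ is bounded, the adaptivity of $\{s_1^k\}$ does not break the concentration, but the careful bookkeeping of the $(1+B)$ range across both the optimism and the softmax terms is where the $(B+1)$ factor—and the bulk of the technical effort—resides.
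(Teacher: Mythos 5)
Your proposal follows essentially the same route as the paper: invoke \cref{th:regret reduction}, bound the dual regret by online projected gradient descent plus an estimation-error term (the paper's \cref{regretLambda}), bound the primal regret by the UCB/softmax analysis of \citet{ghosh22} (the paper's \cref{regretPi} via their Lemma~3), and control the leftover optimism terms by the elliptical-potential argument (their Lemma~4). The only cosmetic difference is that you re-derive the primal optimism analysis rather than citing it, and you use the loose range $V_{c,1}^k\in[0,H]$ where the paper uses $V_{c,1}^k\in[0,1]$ (the cost is a one-time reset indicator, clipped in line~17 of \cref{mainalg}), which does not affect the final rate.
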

\vspace{-3mm}
\paragraph{Proof Sketch of \cref{th:linear mdp special case}}

We provide a proof sketch here and defer the complete proof to \cref{sec:missing proofs of linear mdps}.
We first bound the regret of $\{\pi^k\}_{k=1}^K$ and $\{\lambda^k\}_{k=1}^K$ and then use this to prove the bounds on our algorithm's regret and number of resets with \cref{th:regret reduction}.

We first bound the regret of $\{\lambda^k\}_{k=1}^K$.

\begin{restatable}{lemma}{rregretL}
\label{regretLambda}
Consider $\lambda_c(s) = \langle \xi(s), \theta_c \rangle$ for some $\theta_c \in \mathcal{U}$. Then it holds that 
$
    R_d(\{\lambda^k\}_{k=1}^K, \lambda_c)  \leq 1.5 B \sqrt{K}
    + \sum_{k=1}^K (\lambda^k(s_1^k) - \lambda_c(s_1^k))(V^k_{c,1}(s_1^k) - V^{\pi^k}_{c,1}(s_1^k))
$. 
\end{restatable}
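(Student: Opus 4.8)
The plan is to reduce the dual regret to a textbook online projected gradient descent (OGD) regret, plus an error term arising from the fact that the dual update in \cref{mainalg} uses the model-based cost estimate $V^k_{c,1}$ instead of the true value $V^{\pi^k}_{c,1}$. First I would expand $R_d$ using \cref{regretXY} and the Lagrangian \eqref{eq:lagrangian}. Since $\Lagr^k(\pi^k,\lambda)=V^{\pi^k}_{r,1}(s_1^k)-\lambda(s_1^k)V^{\pi^k}_{c,1}(s_1^k)$ depends on $\lambda$ only through the linear term, the reward value cancels between $\Lagr^k(\pi^k,\lambda^k)$ and $\Lagr^k(\pi^k,\lambda_c)$, giving
\[
R_d(\{\lambda^k\}_{k=1}^K,\lambda_c)=\textstyle\sum_{k=1}^K (\lambda_c(s_1^k)-\lambda^k(s_1^k))\,V^{\pi^k}_{c,1}(s_1^k).
\]
Writing $\lambda^k(s)=\langle\xi(s),\theta^k\rangle$ and $\lambda_c(s)=\langle\xi(s),\theta_c\rangle$ (by \cref{as:bounded dual variable}), this equals $\sum_k \langle\xi(s_1^k),\theta_c-\theta^k\rangle\,V^{\pi^k}_{c,1}(s_1^k)$.

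Next I would add and subtract the estimated cost value $V^k_{c,1}(s_1^k)$ to split the regret into an exact-gradient part and an estimation-error part:
\[
R_d=\underbrace{\textstyle\sum_{k=1}^K \langle\xi(s_1^k),\theta_c-\theta^k\rangle\,V^{k}_{c,1}(s_1^k)}_{(\mathrm{I})}+\underbrace{\textstyle\sum_{k=1}^K \langle\xi(s_1^k),\theta_c-\theta^k\rangle\,(V^{\pi^k}_{c,1}(s_1^k)-V^{k}_{c,1}(s_1^k))}_{(\mathrm{II})}.
\]
Flipping both signs in each summand of $(\mathrm{II})$ shows it equals $\sum_k (\lambda^k(s_1^k)-\lambda_c(s_1^k))(V^k_{c,1}(s_1^k)-V^{\pi^k}_{c,1}(s_1^k))$, which is exactly the residual carried in the statement; it is left untouched here and is controlled later by the UCB concentration arguments when proving \cref{th:linear mdp special case}.

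It then remains to bound the exact-gradient term $(\mathrm{I})$ by $1.5B\sqrt{K}$. I would observe that $(\mathrm{I})$ is precisely the OGD regret on the linear losses $\hat\ell_k(\theta)=-\langle\xi(s_1^k),\theta\rangle V^k_{c,1}(s_1^k)$ with gradient $g_k=-\xi(s_1^k)V^k_{c,1}(s_1^k)$, because the dual step in \cref{mainalg} is $\theta^{k+1}=\mathrm{Proj}_{\mathcal{U}}(\theta^k+\eta_k\xi(s_1^k)V^k_{c,1}(s_1^k))=\mathrm{Proj}_{\mathcal{U}}(\theta^k-\eta_k g_k)$, so $(\mathrm{I})=\sum_k\langle g_k,\theta_c-\theta^k\rangle=\sum_k \hat\ell_k(\theta^k)-\hat\ell_k(\theta_c)$. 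The surrogate gradients satisfy $\|g_k\|_2\le 1$, since $\|\xi(s_1^k)\|_2\le 1$ and the clipping of $Q^k_{c,h}$ in \cref{mainalg} keeps $V^k_{c,1}(s_1^k)\in[0,1]$. With $\|\theta\|_2\le B$ on $\mathcal{U}$, initialization $\theta^1=0$, and step size $\eta_k=B/\sqrt{k}$, the standard projected-OGD guarantee (telescoping the $\|\theta^k-\theta_c\|^2/\eta_k$ terms and using $\sum_{k\le K}k^{-1/2}\le 2\sqrt{K}$) yields $(\mathrm{I})\le \tfrac{3}{2}B\sqrt{K}$, where $B$ plays the role of both the gradient-normalized radius and the distance bound. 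Combining the two terms gives the lemma.

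The genuinely delicate step is the decomposition in the first two paragraphs: recognizing that the dual player is really running OGD with an \emph{inexact} gradient, so its regret factors cleanly into a classical OGD term and the estimation-error term $(\lambda^k-\lambda_c)(V^k_{c,1}-V^{\pi^k}_{c,1})$ that the lemma deliberately keeps. The OGD bound itself is routine; the only point to verify carefully is that the cost value estimates stay in $[0,1]$ so the surrogate gradients have unit norm and the prescribed step size produces the stated $1.5B\sqrt{K}$ constant.
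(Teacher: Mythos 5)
Your proposal is correct and follows essentially the same route as the paper: the identical add-and-subtract decomposition of $R_d$ into the surrogate term involving $V^k_{c,1}$ plus the estimation-error residual, followed by recognizing the surrogate term as the regret of online projected gradient descent on linear losses with gradient norm $G\le 1$ and radius $B$, giving the $1.5B\sqrt{K}$ bound via the standard guarantee (Lemma 3.1 of Hazan). Your write-up is in fact slightly more explicit than the paper's in spelling out the loss functions and the sign conventions of the dual update.
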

\begin{proof}
We notice first an equality. 
\begin{align*}
    &R_d(\{\lambda^k\}_{k=1}^K, \lambda_c) =\textstyle \sum_{k=1}^K \Lagr^k(\pi^k, \lambda^k) - \Lagr^k(\pi^k, \lambda_c)\\    
    &= \textstyle \sum_{k=1}^K \lambda_c(s_1^k)V^{\pi^k}_{c,1}(s_1^k) - \lambda^k(s_1^k)V^{\pi^k}_{c,1}(s_1^k) \\
    &= \textstyle \sum_{k=1}^K  (\lambda^k(s_1^k) - \lambda_c(s_1^k))(- V^k_{c,1}(s_1^k))  \\
    &+ \textstyle \sum_{k=1}^K   (\lambda^k(s_1^k) - \lambda_c(s_1^k))(V^k_{c,1}(s_1^k) - V^{\pi^k}_{c,1}(s_1^k)). 
\end{align*}

We observe that the first term is an online linear problem for $\theta^k$ (the parameter of $\lambda^k(\cdot)$). In episode $k \in [K]$, $\lambda^k$ is played, and then the loss is revealed. Since the space of $\theta^k$ is convex, we use standard results (see Lemma $3.1$ in \citet{hazan2016introduction}) to show that updating $\theta^k$ through projected gradient descent results in an upper bound for $\sum_{k=1}^K  (\lambda^k(s_1^k) - \lambda_c(s_1^k))(- V^k_{c,1}(s_1^k))$. 
\end{proof}

We now bound the regret of $\{\pi\}_{k=1}^K$.
\begin{restatable}{lemma}{rregretPi}
\label{regretPi}
Consider any $\pi_c$. With high probability, 
$
R_p(\{\pi\}_{k=1}^K, \pi_{c}) \leq 2H(1+B+H) + 
 \sum_{k=1}^K  V^k_{r,1}(s_1^k) - V^{\pi^k}_{r,1} 
 (s_1^k) + \lambda^k(s^k_1)( V^{\pi^k}_{c,1}(s^k_1) - V^k_{c,1}(s^k_1)).
$
\end{restatable}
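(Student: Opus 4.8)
\textbf{Proof plan for \cref{regretPi}.} The plan is to peel the primal regret into an online-learning/optimism part that I control through the design of the softmax policy and the UCB bonus, and a remainder that is \emph{exactly} the estimation-gap summation appearing on the right-hand side. First I would introduce the per-episode effective reward $\tilde r^k_h := r_h - \lambda^k(s_1^k)\,c_h$ (here $\lambda^k(s_1^k)$ is a fixed nonnegative scalar within episode $k$), so that by linearity of the value function in the reward $\Lagr^k(\pi,\lambda^k) = V^{\pi}_{\tilde r^k,1}(s_1^k)$, and write the algorithm's optimistic estimate as $\widehat V^k_1 := V^k_{r,1} - \lambda^k(s_1^k) V^k_{c,1}$ with per-step estimate $\widehat Q^k_h := Q^k_{r,h} - \lambda^k(s_1^k) Q^k_{c,h}$. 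By line~6 of \cref{mainalg}, $\pi_{h,k}$ is precisely the softmax of $\alpha \widehat Q^k_h$, and $\widehat V^k_h(\cdot) = \langle \widehat Q^k_h(\cdot,\cdot),\pi_{h,k}(\cdot|\cdot)\rangle$.

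With this notation $R_p(\{\pi\}_{k=1}^K,\pi_c) = \sum_{k=1}^K V^{\pi_c}_{\tilde r^k,1}(s_1^k) - V^{\pi^k}_{\tilde r^k,1}(s_1^k)$, and adding and subtracting $\widehat V^k_1(s_1^k)$ splits it as
\[
\underbrace{\textstyle\sum_{k=1}^K V^{\pi_c}_{\tilde r^k,1}(s_1^k) - \widehat V^k_1(s_1^k)}_{(\mathrm A)}
+ \underbrace{\textstyle\sum_{k=1}^K \widehat V^k_1(s_1^k) - V^{\pi^k}_{\tilde r^k,1}(s_1^k)}_{(\mathrm B)}.
\]
A direct expansion of $\widehat V^k_1$ and $V^{\pi^k}_{\tilde r^k,1}$ shows that $(\mathrm B)$ equals exactly $\sum_{k=1}^K V^k_{r,1}(s_1^k) - V^{\pi^k}_{r,1}(s_1^k) + \lambda^k(s_1^k)\big(V^{\pi^k}_{c,1}(s_1^k) - V^k_{c,1}(s_1^k)\big)$, i.e., the summation already in the statement. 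Hence it remains only to show $(\mathrm A) \le 2H(1+B+H)$ with high probability.

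For $(\mathrm A)$ I would apply the value-difference (extended performance-difference) lemma to rewrite each summand $V^{\pi_c}_{\tilde r^k,1}(s_1^k) - \widehat V^k_1(s_1^k)$ as a sum over $h$ of (i) the action-selection gap $\mathbb{E}_{\pi_c}[\langle \widehat Q^k_h(s_h,\cdot),\,\pi_{c,h}(\cdot|s_h) - \pi_{h,k}(\cdot|s_h)\rangle \mid s_1 = s_1^k]$ and (ii) the one-step Bellman errors $\mathbb{E}_{\pi_c}[(\tilde r^k_h + P_h \widehat V^k_{h+1} - \widehat Q^k_h)(s_h,a_h)\mid s_1=s_1^k]$. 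For (i), since $\pi_{h,k}$ maximizes $\langle \widehat Q^k_h(s,\cdot),p\rangle + \tfrac1\alpha \mathcal{H}(p)$ over distributions $p$ (entropy-regularized greedy), the optimality condition gives $\langle \widehat Q^k_h(s,\cdot),\pi_{c,h}-\pi_{h,k}\rangle \le \tfrac1\alpha(\mathcal{H}(\pi_{h,k}) - \mathcal{H}(\pi_{c,h})) \le \tfrac{\log|\mathcal{A}|}{\alpha}$; summing over $h\in[H]$ and $k\in[K]$ and substituting $\alpha = \tfrac{\log(|\mathcal{A}|)K}{2(1+B+H)}$ gives precisely $\tfrac{HK\log|\mathcal{A}|}{\alpha} = 2H(1+B+H)$. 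For (ii), I would invoke optimism: on a high-probability event the least-squares estimates $w^k_{r,h},w^k_{c,h}$ concentrate so that, with the bonus scaled by $\beta$, $Q^k_{r,h}$ over-estimates and $Q^k_{c,h}$ under-estimates their Bellman backups, making each Bellman error of $\widehat Q^k_h$ nonpositive, so the terms in (ii) contribute at most $0$.

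The main obstacle is establishing this optimism claim uniformly over all $k$ and $h$: it needs the standard but delicate self-normalized concentration analysis for linear MDPs (bounding the estimation error of $w^k_{r,h},w^k_{c,h}$ via the elliptical potential together with a covering argument over the induced value-function class) and the correct calibration of $\beta$, so that the reward over-estimation and the cost under-estimation hold \emph{simultaneously} on one high-probability event. This is exactly the component I would import from the linear-MDP UCB machinery (as in \citet{jin19, ghosh22}); the remaining bookkeeping — matching $(\mathrm B)$ to the stated summation and the Hedge bound in (i) — is self-contained given the notation above.
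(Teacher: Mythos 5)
Your proposal is correct and follows essentially the same route as the paper: the add-and-subtract of the optimistic estimate $V^k_{r,1}(s_1^k)-\lambda^k(s_1^k)V^k_{c,1}(s_1^k)$ is exactly the paper's decomposition, your term $(\mathrm B)$ is verbatim the summation in the statement, and your term $(\mathrm A)$ is precisely the quantity the paper bounds by $2H(1+B+H)$ by citing Lemma~3 of \citet{ghosh22}. The only difference is that you sketch the proof of that cited lemma (extended value-difference decomposition, the entropy-regularized-greedy bound $\log|\mathcal{A}|/\alpha$ per step, and nonpositivity of the Bellman errors under optimism) rather than invoking it as a black box; your sketch matches its actual content.
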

\vspace{-4mm}
\begin{proof}
First we expand the regret into two terms.
\begin{small}
\begin{align*}
    &R_p(\{\pi\}_{k=1}^K, \pi_{c}) = \textstyle \sum_{k=1}^K \Lagr^k(\pi_{c}, \lambda^k) - \Lagr^k(\pi^{k}, \lambda^k)\\
    = &\textstyle \sum_{k=1}^K \hspace{-0.5mm} V^{\pi_c}_{r,1}(s_1^k) \hspace{-0.5mm}-\hspace{-0.5mm} \lambda^k(s^k_1)V^{\pi_c}_{c,1}(s^k_1)\hspace{-0.5mm}-\hspace{-0.5mm} [V^{\pi^k}_{r,1}(s_1^k) \hspace{-0.5mm}-\hspace{-0.5mm} \lambda^k(s^k_1)V^{\pi^k}_{c,1}(s^k_1)]\\
    = &\textstyle\sum_{k=1}^K \hspace{-0.5mm} V^{\pi_c}_{r,1}(s_1^k) \hspace{-0.5mm}-\hspace{-0.5mm} \lambda^k(s^k_1)V^{\pi_c}_{c,1}(s^k_1) \hspace{-0.5mm}-\hspace{-0.5mm} [V^k_{r,1}(s_1^k) \hspace{-0.5mm}-\hspace{-0.5mm} \lambda^k(s^k_1)V^k_{c,1}(s^k_1)] \\
    &+ \textstyle\sum_{k=1}^K  V^k_{r,1}(s_1^k) \hspace{-0.5mm}-\hspace{-0.5mm} V^{\pi^k}_{r,1} 
 (s_1^k) \hspace{-0.5mm}+\hspace{-0.5mm} \lambda^k(s^k_1)( V^{\pi^k}_{c,1}(s^k_1) \hspace{-0.5mm}-\hspace{-0.5mm} V^k_{c,1}(s^k_1)).
\end{align*}
\end{small}
To bound the first term, we use Lemma $3$ from \citet{ghosh22}, which characterizes the property of upper confidence bound. 
\end{proof}
Lastly, we derive a bound on $R_d(\{\lambda^k\}_{k=1}^K,\lambda_c) + R_p(\{\pi^k\}_{k=1}^K, \pi_c)$, which directly implies our final upper bound on Regret($K$) and Resets($K$) in \cref{th:linear mdp special case} by \cref{th:regret reduction}. Combining the upper bounds in \cref{regretLambda} and \cref{regretPi}, we have a high-probability upper bound of
\begin{align*}
&R_d(\{\lambda^k\}_{k=1}^K,\lambda_c) + R_p(\{\pi^k\}_{k=1}^K, \pi_c) \\
    &\leq 1.5 B \sqrt{K} + 2H(1+B+H) + \\
    &+ \textstyle\sum_{k=1}^K \hspace{-0.5mm} V^k_{r,1}(s_1^k) \hspace{-0.5mm}-\hspace{-0.5mm} V^{\pi^k}_{r,1}(s_1^k) \hspace{-0.5mm}+\hspace{-0.5mm} \lambda_c(s_1^k) ( V^{\pi^k}_{c,1}(s_1^k) \hspace{-0.5mm}-\hspace{-0.5mm} V^k_{c,1}(s_1^k) )
\end{align*}
where the last term is the overestimation error due to optimism. Note that for all $k \in [K]$, $V^k_{r,1}(s_1^k)$ and $V^k_{c,1}(s_1^k)$ are as defined in \cref{mainalg} and are optimistic estimates of $V^{\pi^*}_{r,1}(s_1^k)$ and $V^{\pi^*}_{c,1}(s_1^k)$. To bound this term, we use Lemma $4$ from \citep{ghosh22}.
\vspace{-2mm}
\subsection{Other Possible Instantiations}
We demonstrated above that our general framework can be instantiated to achieve sublinear regret and total number of resets. Importantly, our algorithm serves as an example of how our general framework can be used to systematically design new algorithms for reset-free RL. We can leverage the multitude of existing algorithms that aim to minimize regret in a two-player game. An example of a different strategy is using a no-regret algorithm like optimistic mirror descent~\citep{mertikopoulos2018optimistic} for the dual player. We can also replace UCB for the primal player with an online MDP no-regret algorithm such as a variation of POLITEX~\citep{chad19}. Further studying different combinations of baseline algorithms is an interesting future research direction, which perhaps could even improve existing algorithms for more specialized settings.

\vspace{-3mm}
\section{Conclusion}
We propose a generic no-regret reduction for designing provable reset-free RL algorithms. Our reduction casts reset-free RL into the regret minimization problem of a two-player game, for which many existing no-regret algorithms are available. As a result, we can reuse these techniques, and future better techniques, to systematically build new reset-free RL algorithms. In particular, we design a reset-free RL algorithm for linear MDPs using our new reduction techniques, taking the first step towards designing provable reset-free RL algorithms. Extending these techniques to nonlinear function approximators and verifying their effectiveness empirically are important future research directions.

\section*{Acknowledgements}
Part of this work was done during Hoai-An Nguyen's internship at Microsoft Research.

\pagebreak

\bibliography{example_paper}
\bibliographystyle{icml2023}

\newpage
\appendix
\onecolumn
\section{Appendix}
\subsection{Missing Proofs for ~\Cref{sec:reduction}} \label{sec:missing proof of reduction}
\subsubsection{Proof of \cref{th:existence of shared saddle-point}}
\saddlePt*

For policy $\pi^*$, we define it by the following construction (we ignore writing out the time dependency for simplicity): 
first, we define a cost-based MDP $\mathcal{M}_c = (\mathcal{S}, \mathcal{A}, P, c, H)$.
Let $Q_c^*(s,a) = \min_{\pi\in\Delta} Q_c^\pi(s,a)$ and $V_c^*(s) = \min_{\pi\in\Delta} V_c^\pi(s)$ be the optimal values, where we recall $V_c^\pi$ and $Q_c^\pi$ are the state and state-action values under policy $\pi$ with respect to the cost.
Now we construct another reward-based MDP  $\overline{\mathcal{M}} = (\mathcal{S}, \overline{\mathcal{A}}, P, r, H)$, where we define the state-dependent action space $\overline{\mathcal{A}}$ as 
\begin{align*}
    \overline{\mathcal{A}}_s = \{ a \in \mathcal{A} :  Q_c^*(s,a) \leq V_c^*(s) \}.
\end{align*}
By definition, $\overline{\mathcal{A}}_s $ is non-empty for all $s$. We define a shorthand notation: we write $ \pi \in \overline{\mathcal{A}}(s)$ if 
$\mathbb{E}_{\pi}[\sum_{t=1}^H \mathds{1}\{ a_t \notin   \overline{\mathcal{A}}_{s_t} \} | s_1 = s  ] = 0$. Then we have the following lemma, which is a straightforward application of the performance difference lemma.
\restrPolicy*
\begin{proof}
By performance difference lemma~\citep{kakade2002approximately}, we can write 
\begin{align*}
    V_c^\pi(s_1) - V_c^* (s_1)
    = \mathbb{E}_{\pi} \left[ \sum_{t=1}^H  Q_c^*(s_t, a_t) - V_c^* (s_t) |s_1 = s_1  \right].
\end{align*}
If for some $s_1 \in \mathcal{S}$, $\pi \in \overline{\mathcal{A}}(s_1)$, then $\mathbb{E}_{\pi} \left[ \sum_{t=1}^H  Q_c^*(s_t, a_t) - V_c^* (s_t) \right] \leq 0$, which implies $V_c^\pi(s_1) \leq V_c^* (s_1)$. But since $V_c^*$ is optimal, $V_c^\pi(s_1)  = V_c^* (s_1)$.
On the other hand, suppose $V_c^\pi(s_1)=0$. It implies $\mathbb{E}_{\pi} \left[ \sum_{t=1}^H  Q_c^*(s_t, a_t) - V_c^* (s_t)   \right] = 0$ since $V^*_c(s_1) = 0$. Because by definition of optimality $ Q_c^*(s_t, a_t) -V_c^* (s_t) \geq 0 $, this implies $\pi \in \overline{\mathcal{A}}(s_1)$.
\end{proof}

We set our candidate policy $\pi^*$ as the optimal policy of this $\overline{\mathcal{M}}$.
By \cref{lm:optimality of restricted policies main}, we have $V_c^{\pi^*}(s) =  V_c^*(s)$, so it is also an optimal policy to $\mathcal{M}_c$. We prove our main claim of \cref{th:existence of shared saddle-point} below:
    \begin{align*}
          V^{\pi^*}_{r,1}(s_1) - \lambda(s_1) V^{\pi^*}_{c,1}(s_1) 
         \geq  V^{\pi^*}_{r,1}(s_1) - \hat{\lambda}(s_1) V^{\pi^*}_{c,1}(s_1) 
          \geq  V^{\pi}_{r,1}(s_1) - \hat{\lambda}(s_1) V^{\pi}_{c,1}(s_1). 
    \end{align*}
\begin{proof}
    Because $V^{\pi^*}_{c,1}(s_1)  = 0$ (for an initial state $s_1$ such that the CMDP is feasible), the first inequality is trivial: 
    \begin{align*}
        V^{\pi^*}_{r,1}(s_1) - \lambda(s_1) V^{\pi^*}_{c,1}(s_1)  
        = V^{\pi^*}_{r,1}(s_1)
        = V^{\pi^*}_{r,1}(s_1) - \hat{\lambda}(s_1) V^{\pi^*}_{c,1}(s_1). 
    \end{align*}
    For the second inequality, we use \cref{lm:optimality of restricted policies main}: 
    \begin{align*}
         V^{\pi}_{r,1}(s_1) - \hat{\lambda}(s_1) V^{\pi}_{c,1}(s_1) 
         &\leq 
         \max_{\pi\in\Delta} V^{\pi}_{r,1}(s_1) - \hat{\lambda}(s_1) V^{\pi}_{c,1}(s_1) \\
         &= \min_{y\geq 0} \max_{\pi \in \Delta}  V^\pi_{r,1}(s_1) - y V^\pi_{c,1}(s_1)     \\
        &=  \max_{\pi \in \overline{\mathcal{A}_c}(s_1)}  V^\pi_{r,1}(s_1) & \textrm{(By  \cref{lm:optimality of restricted policies main} )}\\
        &=   V^{\pi^*}_{r,1}(s_1) \\
        &= V^{\pi^*}_{r,1}(s_1) - \hat{\lambda}(s_1) V^{\pi^*}_{c,1}(s_1). 
    \end{align*}
\end{proof}

\subsubsection{Proof of \cref{{lm:regret equivalence}}}
\saddlePtCor*
\begin{proof}

To prove $\text{Regret}(K) = \sum_{k=1}^K V_{r,1}^{\pi^*}(s_1^k) - V_{r,1}^{\pi^k}(s_1^k)$, it suffices to prove 
$ \sum_{k=1}^K V_{r,1}^{\pi^*}(s_1^k)  =  \max_{\pi \in \Dfree }\sum_{k=1}^K V^{\pi}_{r,1}(s_1^k)$. By \cref{lm:optimality of restricted policies main} and under \cref{as:reset-free is feasible.}, we notice that 
$
\max_{\pi \in \Dfree }\sum_{k=1}^K V^{\pi}_{r,1}(s_1^k) = 
\max_{ \pi \in \overline{\mathcal{A}}(s_1^k), \forall k \in [K]}\sum_{k=1}^K V^{\pi}_{r,1}(s_1^k)  
$.
This is equal to $ \sum_{k=1}^K V_{r,1}^{\pi^*}(s_1^k) $ by the definition of $\pi^*$ in the proof of \cref{th:existence of shared saddle-point}.
\end{proof} 
    
\subsubsection{Proof of \cref{lemma:additional saddle pt}}
\lambdaPlus*
\begin{proof}
We prove that \cref{eq:saddle-point} holds for $(\pi^*, \lambda^*)$, that is 
\begin{align*}
          V^{\pi^*}_{r,1}(s_1) - \lambda(s_1) V^{\pi^*}_{c,1}(s_1) 
         \geq  V^{\pi^*}_{r,1}(s_1) - \lambda^*(s_1) V^{\pi^*}_{c,1}(s_1) 
          \geq  V^{\pi}_{r,1}(s_1) - \lambda^*(s_1) V^{\pi}_{c,1}(s_1). 
\end{align*}
Because $V^{\pi^*}_{c,1}(s_1)=0$ (for an initial state $s_1$ such that the CMDP is feasible), the first inequality is trivial:
 \begin{align*}
        V^{\pi^*}_{r,1}(s_1) - \lambda(s_1) V^{\pi^*}_{c,1}(s_1)  
        = V^{\pi^*}_{r,1}(s_1)
        = V^{\pi^*}_{r,1}(s_1) - \lambda^*(s_1) V^{\pi^*}_{c,1}(s_1). 
\end{align*}
For the second inequality, we use \cref{th:existence of shared saddle-point}:
\begin{align*}
    V^{\pi}_{r,1}(s_1) - \lambda^{*}(s_1) V^{\pi}_{c,1}(s_1) 
    \leq  &V^{\pi}_{r,1}(s_1) - \hat{\lambda}(s_1) V^{\pi}_{c,1}(s_1) \\ 
    \leq &V^{\pi^{*}}_{r,1}(s_1) - \hat{\lambda}(s_1) V^{\pi^{*}}_{c,1}(s_1) \\ 
    = &V^{\pi^{*}}_{r,1}(s_1) - \lambda^{*}(s_1) V^{\pi^{*}}_{c,1}(s_1)
\end{align*}
where the first step is because $V^\pi_{c,1}(s_1)$ by definition is in $[0,1]$ and $\lambda^* = \hat{\lambda}+1$, and the second step is by \cref{th:existence of shared saddle-point}.
\end{proof}

\subsubsection{Proof of \cref{th:regret reduction}}
\regretR*
We first establish the following intermediate result that will help us with our decomposition. 
\rdiffLemma*
\begin{proof}
We derive this lemma by \cref{th:existence of shared saddle-point} and \cref{lemma:additional saddle pt}.
First notice by \cref{th:existence of shared saddle-point} and \cref{lemma:additional saddle pt} that for $\lambda' = \lambda^*, \hat{\lambda}$, 
\begin{align*}
    \sum\limits_{k=1}^K \Lagr^k(\pi^*, \lambda') 
    &= \sum\limits_{k=1}^K  V^{\pi^*}_{r,1}(s_1^k) - \lambda'(s_1^k) V^{\pi^*}_{c,1}(s_1^k) \\
    &\leq \sum\limits_{k=1}^K  V^{\pi^*}_{r,1}(s_1^k) - \lambda^k(s_1^k) V^{\pi^*}_{c,1}(s_1^k) = 
    \sum\limits_{k=1}^K \Lagr^k(\pi^*, \lambda^k).
\end{align*}
Then we can derive
\begin{align*}
    \sum\limits_{k=1}^K (\Lagr^k(\pi^*, \lambda') - \Lagr^k(\pi^k, \lambda^k)) &=\sum\limits_{k=1}^K \Lagr^k(\pi^*, \lambda') - \Lagr^k(\pi^*, \lambda^k) + \Lagr^k(\pi^*, \lambda^k) - \Lagr^k(\pi^k, \lambda^k) \\
    &\leq \sum\limits_{k=1}^K \Lagr^k(\pi^*, \lambda^k) - \Lagr^k(\pi^k, \lambda^k) = R_p(\{\pi\}_{k=1}^K, \pi^*)
\end{align*}
which finishes the proof.
\end{proof} 

Then we upper bound Regret($K$) and Resets($K$) by $R_p(\{\pi^k\}_{k=1}^K, \pi_c)$ and $R_d(\{\lambda^k\}_{k=1}^K, \lambda_c)$ for suitable comparators. This decomposition is inspired by the techniques used in \citet{nguyen18}.

We first bound Resets($K$).
\rCV*
\begin{proof}
Notice $\sum_{k=1}^K V_{c,1}^{\pi^k}(s^k_1) = \sum_{k=1}^K \Lagr^k(\pi^k, \hat{\lambda}) - \Lagr^k (\pi^k, \lambda^{*})$ where $(\pi^*,\hat{\lambda})$ is the saddle-point defined in \cref{th:existence of shared saddle-point}. This is because, as defined, $\lambda^* = \hat{\lambda}+1$. Therefore, we bound the RHS. We have
\begin{align*}
\sum\limits_{k=1}^K \Lagr^k(\pi^k, \hat{\lambda}) - \Lagr^k (\pi^k, \lambda^{*}) = &\sum\limits_{k=1}^K \Lagr^k(\pi^k, \hat{\lambda}) - \Lagr^k(\pi^k, \lambda^k) + \Lagr^k(\pi^k, \lambda^k) - \Lagr^k (\pi^k, \lambda^{*})\\ \leq & \sum\limits_{k=1}^K \Lagr^k(\pi^*, \hat{\lambda}) - \Lagr^k(\pi^k, \lambda^k) + \Lagr^k(\pi^k, \lambda^k) - \Lagr^k(\pi^k, \lambda^{*}) \\
\leq & R_p(\{\pi\}_{k=1}^K, \pi^*) + R_d(\{\lambda\}_{k=1}^K, \lambda^{*})
\end{align*}
where second inequality is because $\sum_{k=1}^K \Lagr^k(\pi^*, \hat{\lambda}) \geq \sum_{k=1}^K \Lagr^k(\pi^k, \hat{\lambda})$ by \cref{th:existence of shared saddle-point}, and the first inequality follows from \cref{diffLemma} and \cref{regretXY}. 
\end{proof}
Lastly, we bound Regret$(K)$ with the lemma below and \cref{lm:regret equivalence}.
\rregret*
\begin{proof}
Note that $\Lagr(\pi^*, \lambda^*) = \Lagr(\pi^*, 0)$ since $V_{c,1}^{\pi^*}(s_1^k) = 0$ for all $k \in [K]$. Since by definition, for any $\pi$, $\Lagr^k(\pi, 0) = V_{r,1}^{\pi}(s_1^k)$, we have the following: 
\begin{align*}
&\sum_{k=1}^K V_{r,1}^{\pi^*}(s_1^k) - V_{r,1}^{\pi^k}(s_1^k)= \sum_{k=1}^K \Lagr^k(\pi^*, \lambda^*) - \Lagr^k(\pi^k, 0) \\
    = &\sum_{k=1}^K \Lagr^k(\pi^*, \lambda^*) - \Lagr^k(\pi^k, \lambda^k) + \Lagr^k(\pi^k, \lambda^k) - \Lagr^k(\pi^k, 0)\\
     \leq &R_p(\{\pi\}_{k=1}^K, \pi^*) + R_d(\{\lambda\}_{k=1}^K, 0)
\end{align*}
where the last inequality follows from~\Cref{diffLemma} and \cref{regretXY}. 
\end{proof}

\subsection{Missing Proofs for \cref{sec:linear MDP}} \label{sec:missing proofs of linear mdps}
\subsubsection{Proof of \cref{th:linear mdp special case}}
\rLinearMDP*

We first bound the regret of $\{\pi^k\}_{k=1}^K$ and $\{\lambda^k\}_{k=1}^K$, and then use this to prove the bounds on our algorithm's regret and number of resets with \cref{th:regret reduction}. 

We first bound the regret of $\{\lambda^k\}_{k=1}^K$.
\rregretL*

\begin{proof}
We notice first an equality. 
\begin{align*}
    R_d(\{\lambda^k\}_{k=1}^K, \lambda_c) =&\sum\limits_{k=1}^K \Lagr^k(\pi^k, \lambda^k) - \Lagr^k(\pi^k, \lambda_c)\\    
    = &\sum\limits_{k=1}^K \lambda_c(s_1^k)V^{\pi^k}_{c,1}(s_1^k) - \lambda^k(s_1^k)V^{\pi^k}_{c,1}(s_1^k) \\
    =&\sum\limits_{k=1}^K \lambda_c(s_1^k)V^{\pi^k}_{c,1}(s_1^k) - \lambda^k(s_1^k)V^{\pi^k}_{c,1}(s_1^k) \\ &+\sum\limits_{k=1}^K \lambda_c(s_1^k)V^k_{c,1}(s_1^k)-\lambda_c(s_1^k)V^k_{c,1}(s_1^k) + \lambda^k(s_1^k)V^k_{c,1}(s_1^k) - \lambda^k(s_1^k)V^k_{c,1}(s_1^k)\\
    = &\sum\limits_{k=1}^K  (\lambda^k(s_1^k) - \lambda_c(s_1^k))(- V^k_{c,1}(s_1^k))  + \sum\limits_{k=1}^K   (\lambda^k(s_1^k) - \lambda_c(s_1^k))(V^k_{c,1}(s_1^k) - V^{\pi^k}_{c,1}(s_1^k)). 
\end{align*}

We observe that the first term is an online linear problem for $\theta^k$ (the parameter of $\lambda^k(\cdot)$). In episode $k \in [K]$, $\lambda^k$ is played, and then the loss is revealed. Since the space of $\theta^k$ is convex, we use standard results (Lemma $3.1$ \citep{hazan2016introduction}) to show that updating $\theta^k$ through projected gradient descent results in an upper bound for $\sum_{k=1}^K  (\lambda^k(s_1^k) - \lambda_c(s_1^k))(- V^k_{c,1}(s_1^k))$. We restate the lemma here.

\begin{lemma}[Lemma $3.1$ \citep{hazan2016introduction}]
Let $\mathcal{S} \subseteq \mathbb{R}^d$ be a bounded convex and closed set in Euclidean space. Denote $D$ as an upper bound on the diameter of $\mathcal{S}$, and $G$ as an upper bound on the norm of the subgradients of convex cost functions 
 $f_k$ over $\mathcal{S}$.
Using online projected gradient descent to generate sequence $\{x_k\}_{k=1}^K$ with step sizes $\{\eta_k = \frac{D}{G\sqrt{k}}, k \in [K]\}$
guarantees, for all $K \geq 1$:
\[
\textrm{Regret}_K = \max_{x^* \in \mathcal{K}} \sum_{k=1}^K f_k(x^k) -  f_k(x^*) \leq 1.5 GD\sqrt{K}.
\]
\end{lemma}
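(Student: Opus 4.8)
The plan is to follow the classical online gradient descent regret analysis, since the iterates are generated by the update $x_{k+1} = \Pi_{\mathcal{S}}(x_k - \eta_k \nabla_k)$, where $\nabla_k$ is a subgradient of $f_k$ at $x_k$ and $\Pi_{\mathcal{S}}$ denotes Euclidean projection onto $\mathcal{S}$. First I would invoke convexity of each $f_k$ to reduce the regret against an arbitrary comparator $x^* \in \mathcal{S}$ to a sum of linear terms, $f_k(x_k) - f_k(x^*) \leq \langle \nabla_k, x_k - x^* \rangle$, so that it suffices to bound $\sum_{k=1}^K \langle \nabla_k, x_k - x^* \rangle$.

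For the per-step estimate I would track the squared distance $\|x_k - x^*\|^2$ to the comparator. Since $x^* \in \mathcal{S}$ and projection onto a convex set is nonexpansive, $\|x_{k+1} - x^*\|^2 \leq \|x_k - \eta_k \nabla_k - x^*\|^2$. Expanding the right-hand side and rearranging yields the one-step inequality $\langle \nabla_k, x_k - x^* \rangle \leq \frac{\|x_k - x^*\|^2 - \|x_{k+1} - x^*\|^2}{2\eta_k} + \frac{\eta_k}{2}\|\nabla_k\|^2$, where the last term is at most $\frac{\eta_k G^2}{2}$ by the subgradient norm bound.

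Summing over $k$, the second contribution is routine: substituting $\eta_k = D/(G\sqrt{k})$ gives $\frac{G^2}{2}\sum_k \eta_k = \frac{GD}{2}\sum_{k=1}^K k^{-1/2} \leq GD\sqrt{K}$, using the harmonic-type estimate $\sum_{k=1}^K k^{-1/2} \leq 2\sqrt{K}$. The hard part will be the first, telescoping-like contribution $\sum_{k=1}^K \frac{1}{2\eta_k}(\|x_k - x^*\|^2 - \|x_{k+1} - x^*\|^2)$, which does not telescope cleanly because the weights $\frac{1}{2\eta_k}$ vary with $k$. I would handle it by summation by parts: writing $a_k := \frac{1}{2\eta_k}$ and $d_k := \|x_k - x^*\|^2$, regrouping produces $\sum_{k=1}^K (a_k - a_{k-1}) d_k$ (with the convention $a_0 := 0$) minus a nonnegative boundary term $a_K d_{K+1}$. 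The increments $a_k - a_{k-1}$ are nonnegative because $\eta_k$ is decreasing, so bounding each $d_k$ by the squared diameter $D^2$ and discarding the boundary term collapses the sum to $D^2 a_K = \frac{D^2}{2\eta_K} = \frac{DG\sqrt{K}}{2}$.

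Adding the two contributions gives $\frac{DG\sqrt{K}}{2} + GD\sqrt{K} = \frac{3}{2}GD\sqrt{K}$, which is the claimed $1.5\,GD\sqrt{K}$ bound. The only subtle points to verify are that the diameter bound $\|x_k - x^*\|^2 \leq D^2$ is applied only after the monotonicity of $\eta_k$ has fixed the sign of the telescoped coefficients, and the elementary estimate $\sum_{k=1}^K k^{-1/2} \leq 2\sqrt{K}$.
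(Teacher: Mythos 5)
Your proof is correct and is precisely the standard online projected gradient descent analysis (convexity linearization, nonexpansiveness of projection, the one-step distance inequality, Abel-type regrouping of the $1/(2\eta_k)$-weighted telescoping terms, and $\sum_{k=1}^K k^{-1/2}\leq 2\sqrt{K}$), yielding the stated $1.5\,GD\sqrt{K}$ bound. The paper itself gives no proof of this lemma, deferring entirely to \citet{hazan2016introduction}, and your argument is the same one found there, so there is nothing further to compare.
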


Let us bound $D$. By \cref{as:bounded dual variable}, $\lambda^* = \langle \xi(s), \theta^*\rangle$ and $||\theta^*||_2 \leq B$. Since the comparator we use is $\lambda^*$, we can set $D$ to be $B$.
To bound $G$, we observe that the subgradient of our loss function is $\xi(s)V^k_{c,1}(s_1^k)$ for each $k \in [K]$. Therefore, since $V^k_{c,1}(s_1^k)\in [0,1]$ and $||\xi(s)||_2 \leq 1$ by \cref{as:bounded dual variable}, we can set $G$ to be $1$. 
\end{proof}

We now bound the regret of $\{\pi\}_{k=1}^K$.
\rregretPi*
\begin{proof}

First we expand the regret into two terms.
\begin{align*}
    R_p(\{\pi\}_{k=1}^K, \pi_{c})= &\sum\limits_{k=1}^K \Lagr^k(\pi_{c}, \lambda^k) - \Lagr^k(\pi^{k}, \lambda^k)\\
    = &\sum\limits_{k=1}^K V^{\pi_c}_{r,1}(s_1^k) - \lambda^k(s^k_1)V^{\pi_c}_{c,1}(s^k_1)- [V^{\pi^k}_{r,1}(s_1^k) - \lambda^k(s^k_1)V^{\pi^k}_{c,1}(s^k_1)]\\
    = & \sum\limits_{k=1}^K V^{\pi_c}_{r,1}(s_1^k) - \lambda^k(s^k_1)V^{\pi_c}_{c,1}(s^k_1)- [V^{\pi^k}_{r,1}(s_1^k) - \lambda^k(s^k_1)V^{\pi^k}_{c,1}(s^k_1)]\\
    &+ \sum\limits_{k=1}^K [V^k_{r,1}(s_1^k) - \lambda^k(s^k_1)V^k_{c,1}(s^k_1)] - [V^k_{r,1}(s_1^k) - \lambda^k(s^k_1)V^k_{c,1}(s^k_1)]\\
    = &\sum\limits_{k=1}^K V^{\pi_c}_{r,1}(s_1^k) - \lambda^k(s^k_1)V^{\pi_c}_{c,1}(s^k_1) - [V^k_{r,1}(s_1^k) - \lambda^k(s^k_1)V^k_{c,1}(s^k_1)] \\
    &+\sum\limits_{k=1}^K  V^k_{r,1}(s_1^k) - V^{\pi^k}_{r,1} 
 (s_1^k) + \lambda^k(s^k_1)( V^{\pi^k}_{c,1}(s^k_1) - V^k_{c,1}(s^k_1)).
\end{align*}

To bound the first term, we use Lemma $3$ from \citet{ghosh22}, which characterize the property of upper confidence bound. For completeness, we re-write the lemma here. \footnote{Note that \citet{ghosh22} use a utility function rather than a cost function to denote the constraint on the MDP (cost is just $-1 \times $ utility). Also note that their Lemma $3$ is proved for an arbitrary initial state sequence and for any comparator (which includes $\pi^*$).
}

\begin{lemma}[Lemma $3$ \citep{ghosh22}] 
With probability $1-p/2$, it holds that
$\mathcal{T}_1 = \sum_{k=1}^K \Big{(} V^{\pi_c}_{r,1}(s_1^k) - \lambda^k V_{c,1}^{\pi_c}(s_1^k)\Big{)} - \Big{(} V^{k}_{r,1}(s_1^k) - \lambda^k V_{c,1}^{k}(s_1^k)\Big{)} \leq KH\log(|\mathcal{A}|)/\alpha$. Hence, for $\alpha = \dfrac{\log(|\mathcal{A}|)K}{2(1+C+H)}, \mathcal{T}_1 \leq 2H(1 + C + H)$, where $C$ is such that $\lambda^k \leq C$.

\end{lemma}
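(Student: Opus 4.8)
The plan is to view each episode $k$ as a single planning problem for the scalar-weighted composite reward $g_h(s,a) \coloneqq r_h(s,a) - \lambda^k(s_1^k)\, c_h(s,a)$ (note $\lambda^k(s_1^k)$ is a fixed nonnegative scalar within episode $k$), whose true policy value I denote $V^{\pi}_{g,h}$, and to compare the comparator's true composite value against the optimistic surrogate built in \cref{mainalg}, namely $Q^k_h \coloneqq Q^k_{r,h} - \lambda^k(s_1^k)\,Q^k_{c,h}$ and $V^k_h \coloneqq \sum_a \pi_{h,k}(a|\cdot)\,Q^k_h(\cdot,a)$; by linearity $V^k_h = V^k_{r,h} - \lambda^k(s_1^k) V^k_{c,h}$, so $V^k_1$ is exactly the quantity appearing in $\mathcal{T}_1$. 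The first step is an \emph{exact} per-episode decomposition via the extended value-difference lemma (the policy-optimization analogue of the performance difference lemma, cf.\ Lemma A.1 in \citep{cheng2021heuristic} and \citep{kakade2002approximately}):
\begin{align*}
    V^{\pi_c}_{g,1}(s_1^k) - V^k_1(s_1^k)
    &= \sum_{h=1}^H \mathbb{E}_{\pi_c}\!\left[\langle Q^k_h(s_h,\cdot),\, \pi_{c,h}(\cdot|s_h) - \pi_{h,k}(\cdot|s_h)\rangle \,\middle|\, s_1 = s_1^k\right] \\
    &\quad + \sum_{h=1}^H \mathbb{E}_{\pi_c}\!\left[ g_h(s_h,a_h) + (P_h V^k_{h+1})(s_h,a_h) - Q^k_h(s_h,a_h) \,\middle|\, s_1 = s_1^k\right],
\end{align*}
which splits the gap into a policy-mismatch term and a Bellman-error (optimism) term.

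The hard part is showing the second sum is nonpositive on a good event of probability at least $1-p/2$; this is the only place randomness enters. I would establish the two pointwise estimates $Q^k_{r,h}(s,a) \ge r_h(s,a) + (P_h V^k_{r,h+1})(s,a)$ and $Q^k_{c,h}(s,a) \le c_h(s,a) + (P_h V^k_{c,h+1})(s,a)$ for all $(s,a,h,k)$. These are the reward-UCB (the $+\beta(\phi^\top(\Lambda^{k+1}_h)^{-1}\phi)^{1/2}$ bonus) and cost-LCB (the $-\beta$ bonus) guarantees: under \cref{as:linear mdp} the ridge-regression means $\langle w^{k}_{r,h},\phi\rangle$ and $\langle w^{k}_{c,h},\phi\rangle$ approximate their Bellman targets up to the quadratic form $\beta(\phi^\top(\Lambda^{k+1}_h)^{-1}\phi)^{1/2}$, which follows from a self-normalized (elliptical-potential) concentration argument under the stated choice of $\beta$, and the clippings to $[0,H-h+1]$ and $[0,1]$ preserve the inequalities since the true backups lie in those ranges. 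Combining the two with $\lambda^k(s_1^k)\ge 0$ yields $Q^k_h \ge g_h + P_h V^k_{h+1}$ pointwise, so every Bellman-error summand is $\le 0$. This concentration step is the technically heavy ingredient; since our primal player uses \citep{ghosh22} as its baseline, I would import their elliptical-potential bound here rather than reprove it.

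On the good event the gap is then controlled entirely by the policy-mismatch term, which I would bound using that $\pi_{h,k}(\cdot|s) \propto \exp(\alpha Q^k_h(s,\cdot))$ is the \emph{exact} maximizer over $p\in\Delta(\mathcal{A})$ of $\langle Q^k_h(s,\cdot),p\rangle + \tfrac{1}{\alpha}\mathcal{H}(p)$, with $\mathcal{H}$ the Shannon entropy. Comparing this maximizer against $\pi_{c,h}(\cdot|s)$ gives, for every state $s$ and step $h$,
\[
\langle Q^k_h(s,\cdot),\, \pi_{c,h}(\cdot|s) - \pi_{h,k}(\cdot|s)\rangle \le \tfrac{1}{\alpha}\big(\mathcal{H}(\pi_{h,k}(\cdot|s)) - \mathcal{H}(\pi_{c,h}(\cdot|s))\big) \le \tfrac{\log|\mathcal{A}|}{\alpha},
\]
using $0 \le \mathcal{H}(\cdot) \le \log|\mathcal{A}|$; crucially this is independent of the magnitude of $Q^k_h$, so no $\alpha$-dependent curvature term appears. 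Taking $\mathbb{E}_{\pi_c}$ preserves this uniform bound, and summing over $h\in[H]$ and $k\in[K]$ gives $\mathcal{T}_1 \le KH\log|\mathcal{A}|/\alpha$. Finally, substituting $\alpha = \log(|\mathcal{A}|)K/\big(2(1+C+H)\big)$ yields $KH\log|\mathcal{A}|/\alpha = 2H(1+C+H)$, the claimed ``hence'' bound, where $C$ is any uniform upper bound on $\lambda^k(s_1^k)$ (which equals $B$ here by \cref{as:bounded dual variable}).
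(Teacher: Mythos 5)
Your proposal is correct, but note that the paper itself does not prove this lemma---it imports it verbatim (with $C=B$) from \citet{ghosh22}, restated ``for completeness.'' Your reconstruction---the extended value-difference identity splitting the gap into a policy-mismatch term and a Bellman-error term, optimism ($Q^k_{r,h}$ as a UCB, $Q^k_{c,h}$ as an LCB, combined via $\lambda^k(s_1^k)\geq 0$ so the Bellman-error term is nonpositive on the concentration event of probability $1-p/2$), and the entropy-regularized-maximizer property of the softmax giving the $\alpha$-independent-of-scale bound $\log|\mathcal{A}|/\alpha$ per step---is exactly the argument underlying the cited source's proof, and your final substitution $KH\log|\mathcal{A}|/\alpha = 2H(1+C+H)$ with $C=B$ matches the paper's usage.
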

In our problem setting, we can set $C = B$ in the lemma above. Therefore, the first term is bounded by $2H(1+B+H)$. 
\end{proof}

Lastly, we derive a bound on $R_d(\{\lambda^k\}_{k=1}^K,\lambda_c) + R_p(\{\pi^k\}_{k=1}^K, \pi_c)$, which directly implies our final upper bound on Regret($K$) and Resets($K$) in \cref{th:linear mdp special case} by \cref{th:regret reduction}.  
\begin{lemma}
For any $\pi_c$ and $\lambda_c(s) = \langle \xi(s), \theta_c \rangle$  such that $\| \theta_c \|  \leq B$, we have with probability $1-p$, 
$R_d(\{\lambda^k\}_{k=1}^K,\lambda_c) + R_p(\{\pi^k\}_{k=1}^K, \pi_c) \leq 1.5 B \sqrt{K} + 2H(1+B+H) + O((B+1) \sqrt{d^3H^4K\iota^2})$ where $\iota = \log[\log(|\mathcal{A}|)4dKH/p]$. 
\end{lemma}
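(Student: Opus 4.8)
The plan is to combine the two regret bounds already established for the dual and primal players in \cref{regretLambda} and \cref{regretPi}, and then control the single residual term that survives — namely the overestimation error induced by optimism in \cref{mainalg}. First I would add the bound from \cref{regretLambda},
\[
R_d(\{\lambda^k\}_{k=1}^K, \lambda_c) \leq 1.5 B \sqrt{K} + \sum_{k=1}^K (\lambda^k(s_1^k) - \lambda_c(s_1^k))(V^k_{c,1}(s_1^k) - V^{\pi^k}_{c,1}(s_1^k)),
\]
to the bound from \cref{regretPi},
\[
R_p(\{\pi^k\}_{k=1}^K, \pi_c) \leq 2H(1+B+H) + \sum_{k=1}^K V^k_{r,1}(s_1^k) - V^{\pi^k}_{r,1}(s_1^k) + \lambda^k(s^k_1)( V^{\pi^k}_{c,1}(s^k_1) - V^k_{c,1}(s^k_1)).
\]

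The key algebraic observation is that the two cross-terms carrying the \emph{learned} multiplier $\lambda^k$ cancel: writing $a=\lambda^k(s_1^k)$, $b=\lambda_c(s_1^k)$, $x = V^k_{c,1}(s_1^k)$, $y=V^{\pi^k}_{c,1}(s_1^k)$, the sum $(a-b)(x-y) + a(y-x)$ collapses to $b(y-x) = \lambda_c(s_1^k)(V^{\pi^k}_{c,1}(s_1^k) - V^k_{c,1}(s_1^k))$. Hence after adding the two inequalities I obtain
\[
R_d(\{\lambda^k\}_{k=1}^K, \lambda_c) + R_p(\{\pi^k\}_{k=1}^K, \pi_c) \leq 1.5 B \sqrt{K} + 2H(1+B+H) + \mathcal{E},
\]
where the residual is the overestimation error
\[
\mathcal{E} \coloneqq \sum_{k=1}^K \left( V^k_{r,1}(s_1^k) - V^{\pi^k}_{r,1}(s_1^k) \right) + \lambda_c(s_1^k)\left( V^{\pi^k}_{c,1}(s_1^k) - V^k_{c,1}(s_1^k) \right).
\]
Crucially, the learned $\lambda^k$ has been replaced by the fixed comparator $\lambda_c$, which is uniformly bounded: under \cref{as:bounded dual variable}, $\lambda_c(s) = \langle \xi(s), \theta_c\rangle \in [0,B]$ since $\theta_c \in \mathcal{U}$.

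The remaining work is to bound $\mathcal{E}$. Because $V^k_{r,1}$ and $V^k_{c,1}$ are the optimistic UCB estimates of $V^{\pi^k}_{r,1}$ and $V^{\pi^k}_{c,1}$, the per-episode gap telescopes over the horizon into a sum of elliptical bonus terms $\beta\,(\phi^\top (\Lambda_h^k)^{-1}\phi)^{1/2}$ along the executed trajectories, and I would invoke Lemma $4$ of \citet{ghosh22} — stated for arbitrary (hence adaptive) initial-state sequences — to get $\mathcal{E} \leq O((B+1)\sqrt{d^3 H^4 K \iota^2})$ with probability at least $1-p/2$. A union bound with the $p/2$ already spent by Lemma $3$ of \citet{ghosh22} inside \cref{regretPi} yields the stated probability $1-p$. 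The main obstacle is precisely this bound on $\mathcal{E}$: it requires the full linear-MDP concentration machinery (the elliptical-potential / determinant-trace argument aggregated over the $KH$ visited features), which is what Lemma $4$ packages. The adaptivity of $\{s_1^k\}$ is not an additional difficulty here, as that lemma makes no assumption on how the initial states are generated; the only extra care is confirming the cost-overestimation weight $\lambda_c$ is bounded by $B$, which is immediate from \cref{as:bounded dual variable}.
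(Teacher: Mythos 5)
Your proposal is correct and follows essentially the same route as the paper: you add the bounds from \cref{regretLambda} and \cref{regretPi}, cancel the $\lambda^k$ cross-terms to leave only the comparator-weighted overestimation error $\sum_k (V^k_{r,1}-V^{\pi^k}_{r,1}) + \lambda_c(s_1^k)(V^{\pi^k}_{c,1}-V^k_{c,1})$, and invoke Lemma~4 of \citet{ghosh22} with $\lambda_c(s_1^k)\le B$ to obtain the $O((B+1)\sqrt{d^3H^4K\iota^2})$ term. The only difference is cosmetic: you make the cancellation and the union bound over the two $p/2$ events explicit, which the paper leaves implicit.
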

\begin{proof}
Combining the upper bounds in \cref{regretLambda} and \cref{regretPi}, we have 
an upper bound of
\begin{align*}
    R_d(\{\lambda^k\}_{k=1}^K,\lambda_c) + R_p(\{\pi^k\}_{k=1}^K, \pi_c) 
    = & 1.5B\sqrt{K} + \sum_{k=1}^K (\lambda^k(s_1^k) - \lambda_c(s_1^k))(V^k_{c,1}(s_1^k) - V^{\pi^k}_{c,1}(s_1^k)) \\
    & + 2H(1+B+H) + \sum_{k=1}^K  V^k_{r,1}(s_1^k) - V^{\pi^k}_{r,1} 
 (s_1^k) + \lambda^k(s^k_1)( V^{\pi^k}_{c,1}(s^k_1) - V^k_{c,1}(s^k_1))
    \\= &1.5 B \sqrt{K} + 2H(1+B+H) + \\
    &+ \sum_{k=1}^K  V^k_{r,1}(s_1^k) - V^{\pi^k}_{r,1}(s_1^k)  + \lambda_c(s_1^k) ( V^{\pi^k}_{c,1}(s_1^k) - V^k_{c,1}(s_1^k) )
\end{align*}
where the last term is the overestimation error due to optimism. To bound this term, we use Lemma $4$ from \citet{ghosh22}. We re-write the lemma here. 
\begin{lemma}[Lemma $4$ \citep{ghosh22}]
WIth probability  at least $1-p/2$, for any $\lambda \in [0, C]$, $\sum_{k=1}^K \Big{(} V^{k}_{r,1}(s_1^k) - V_{r,1}^{\pi^k}(s_1^k)\Big{)} +\lambda\sum_{k=1}^K \Big{(} V^{\pi^k}_{c,1}(s_1^k) - V_{c,1}^{k}(s_1^k)\Big{)} \leq O((\lambda+1)\sqrt{d^3H^4K\iota^2})$ where $\iota = \log[\log(|\mathcal{A}|)4dKH/p]$.
\end{lemma}
Since we have a bound on all $\lambda_c(s_1^k)$ of $B$ for all $k \in [K]$, we have a bound of $O((B+1)\sqrt{d^3H^4K\iota^2})$.
\end{proof}


\end{document}